\newtheorem{theorem}{Theorem}[section]
\numberwithin{equation}{section}
\newtheorem{lemma}[theorem]{Lemma}
\begin{document}
%
\title{Greedy Criterion  in Orthogonal Greedy Learning }
%
%
%

\author{{Lin Xu,~Shaobo Lin,~Jinshan Zeng,~Xia Liu
    and~Zongben Xu}
\thanks{L. Xu, X. Liu and Z. B. Xu are with the Institute for Information and System
    Sciences, Xi'an Jiaotong
    University, Xi'an 710049, China.}
\thanks{S. B. Lin is with the College of Mathematics and Information Science, Wenzhou
    University, Wenzhou 325035, China.}
\thanks{J. S. Zeng is with the College of Computer Information Engineering, Jiangxi Normal University, Nanchang, Jiangxi 330022, China.}
\thanks{}}

%
%

\markboth{Journal of \LaTeX\ Class Files,~Vol.~14, No.~8, August~2015}%
{Shell \MakeLowercase{\textit{et al.}}: Bare Demo of IEEEtran.cls for IEEE Journals}
%



\maketitle

\begin{abstract}
Orthogonal greedy learning (OGL) is a stepwise learning scheme that starts with selecting a new atom from a specified dictionary via the  steepest gradient descent (SGD) and  then builds the estimator through orthogonal projection. In this paper, we  find that SGD is not the unique greedy criterion and  introduce a new greedy criterion, called ``$\delta$-greedy threshold'' for learning. Based on the new greedy criterion, we derive an adaptive termination  rule for OGL. Our theoretical study shows that the new
learning scheme can achieve the existing (almost) optimal learning rate of OGL. Plenty of numerical experiments are provided to support that the new scheme can achieve almost optimal generalization performance, while requiring less computation than OGL.
\end{abstract}


\begin{IEEEkeywords}
Supervised learning,  greedy algorithms, orthogonal greedy learning, greedy criterion, generalization capability.
\end{IEEEkeywords}


%
\IEEEpeerreviewmaketitle

\section{Introduction}
%
%
%
%
\IEEEPARstart{S}{upervised} learning  focuses on synthesizing a
function to approximate an underlying relationship between  inputs
and  outputs based on finitely many input-output samples.
Commonly, a system tackling supervised learning problems is  called as a
learning system. A standard learning system usually comprises a
hypothesis space, an optimization strategy and a learning algorithm.
The hypothesis space is a family of parameterized
functions  providing a candidate set of estimators,
the optimization strategy formulates an optimization problem to define the estimator based on  samples,
and the learning algorithm is an inference procedure that numerically solves the optimization problem.

Dictionary learning is a special
learning system,  whose
hypothesis spaces are  linear combinations  of atoms in some given dictionaries.
Here, the dictionary denotes
a family of base learners \cite{Temlaykov2008}. 
For such type hypothesis spaces, many regularization schemes such as the bridge
estimator \cite{Armagan2009}, ridge estimator \cite{Golub1979} and
Lasso estimator \cite{Tibshirani1995} are common used optimization strategies.
When the scale of dictionary is moderate (i.e., about hundreds of atoms), these optimization strategies can be effectively realized by various
learning algorithms such as the regularized least squares algorithms
\cite{Wu2006}, iterative thresholding algorithms
\cite{Daubechies2004} and iterative re-weighted algorithms
\cite{Daubechies2010}. However,  when presented large input 
dictionary, a large portion of the aforementioned learning
algorithms are time-consuming and even worse, they may cause the
sluggishness of the corresponding learning systems.

Greedy learning  or,  more specifically,  learning by greedy type
algorithms, provides a possible way to circumvent the drawbacks of
regularization methods \cite{Barron2008}.  Greedy algorithms
are stepwise inference processes that start from a null model and
solve heuristically   the problem heuristically of making the locally
optimal choice at each step with the hope of finding a global
optimum. Within moderate number of iterations, greedy
algorithms possess charming computational advantage compared with
the regularization schemes \cite{Temlaykov2008}. This property
triggers avid research activities of greedy algorithms in
signal processing \cite{Dai2009,Kunis2008,Tropp2004}, inverse
problems \cite{Donoho2012,Tropp2010}, sparse approximation
\cite{Donoho2007,Temlaykov2011} and  machine learning
\cite{Barron2008,Chen2013a,Lin2013a}.

\subsection{Motivations of greedy criteria}

Orthogonal greedy learning (OGL) is a special greedy learning
strategy. It selects a new atom based on SGD in each iteration 
and then constructs an estimator through 
orthogonal projecting to subspaces spanned by the selected atoms.  
It is well known that SGD needs to traverse the whole dictionary, which leads
to an insufferable computational burden when the scale of dictionary
is large. Moreover, as the samples are  noised, the generalization
capability  of OGL is  sensitive to the number of iterations. In
other words, due to the SGD criterion, a slight turbulence of the
number of atoms may lead to a great change of the generalization
performance.

To overcome the above problems of OGL, a natural idea is to
re-regulate the criterion to choose a new atom by taking the
``greedy criterion'' issue into account. The  Fig.
\ref{IdeaGC}  is an intuitive description to quantify the greedy
criterion,  where   $r_k$ represents the residual at the  $k$-th
iteration,  $g$ is an arbitrary atom from the dictionary and  $\theta$
is the included angle between $r_k$ and $g$. In Fig. \ref{IdeaGC} (a),
both  $r_k$ and  $g$  are normalized to the unit ball due to the greedy
criterion focusing on the orientation rather than magnitude. The
cosine of the angle $\theta$ (cosine similarity) is used to quantify
the greedy criterion.
As shown in Fig. \ref{IdeaGC} (b),
the atom  $g_k$  possessing the smallest   $\theta$   is regarded to be
the greediest one at each iteration in OGL.

\begin{figure}[htb]
    \centering
    \includegraphics[height=3.5cm,width=7cm]{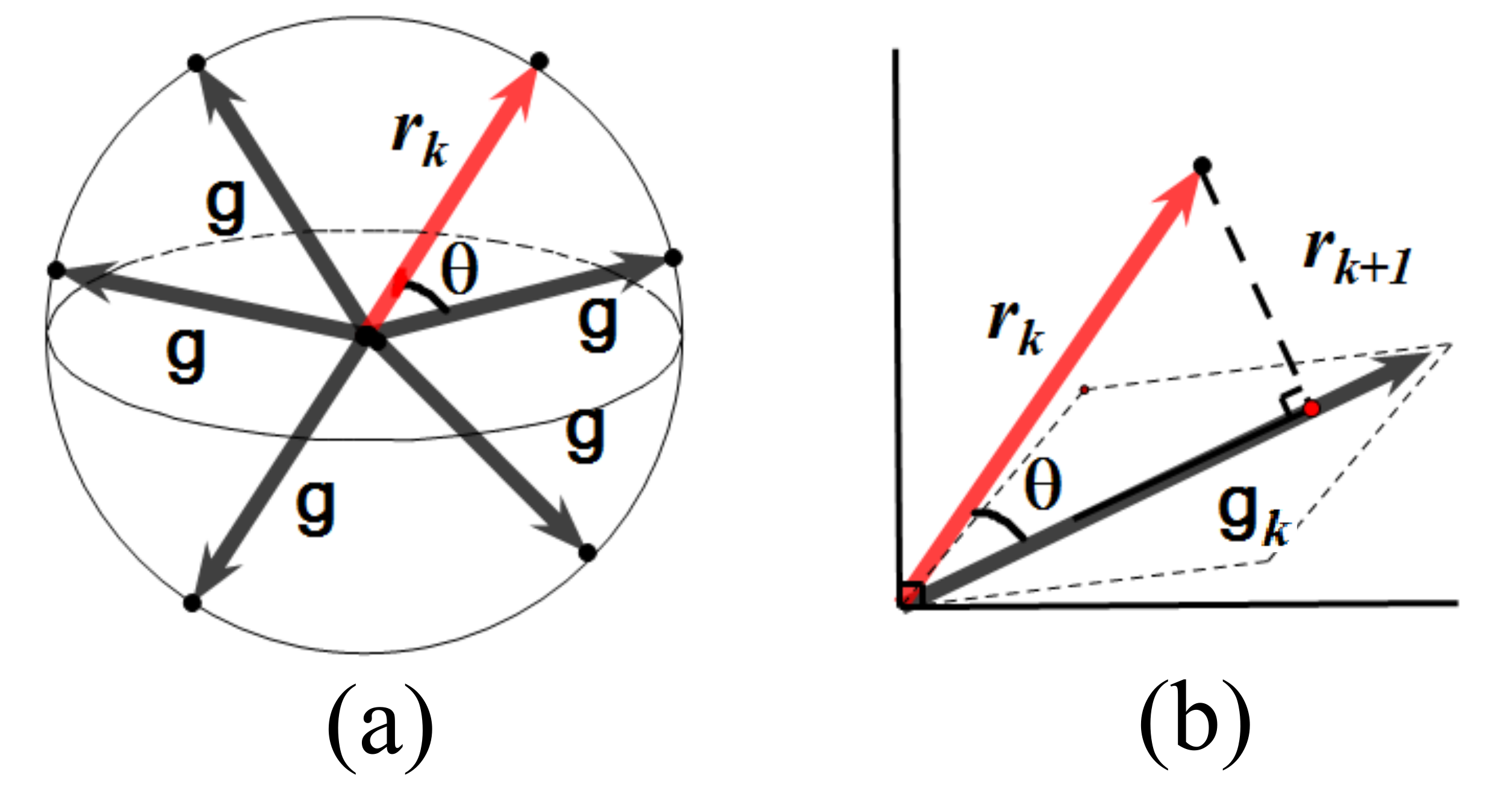}
    \caption{An intuitive description of the greedy criterion. (a) Normalize the current residual  $r_k$ and atoms $g$ to the unit ball. (b) The atom  $g_k$  possessing the smallest   $\theta$   is regarded to be
    the greediest one at each iteration.} \label{IdeaGC}
\end{figure}

Since the  greedy criterion can be quantified by the cosine
similarity, a preferable way to circumvent the aforementioned
problems of OGL is to weaken the level of greed by   thresholding
the cosine similarity. In particular, other than  traversing the
dictionary, we can select the first atom satisfying the thresholding
condition. Such a method essentially reduces the computation cost of
OGL and makes the learning process more stable.

\subsection{Our contributions}

Different from  other three issues, the ``greedy criterion'' issue,
to the best of our knowledge, has not been noted for the learning
purpose. The aim of the present paper is to reveal the importance
and necessity of studying the ``greedy criterion'' issue in OGL. The
main contributions can be summarized as follows.

$\bullet$ We argue that SGD is not the unique criterion for
OGL. There are many other greedy criteria in greedy learning, which
possess  similar learning performance  as SGD. 

$\bullet$ We use a new greedy criterion called the
``$\delta$-greedy threshold'' to quantify the level of greed in OGL.
Although  a similar criterion has already been used in greedy
approximation \cite{Temlaykov2008a},
the novelty of translating it
into greedy learning is that using this criterion can significantly
accelerate the learning process.
We can also prove that, if the number
of iteration is appropriately specified, then OGL with the ``$\delta$-greedy threshold''
can reach the existing
(almost) optimal learning rate of OGL \cite{Barron2008}.

$\bullet$ Based on the ``$\delta$-greedy threshold'' criterion, we
propose  an adaptive terminate
rule  for OGL and then provide a
complete learning system called  $\delta$-thresholding orthogonal
greedy learning ($\delta$-TOGL). Different from  classical
termination  rules that devote to searching the appropriate number of
iterations based on the bias-variance balance principle
\cite{Barron2008,Xu2014}, our study implies that the balance can
also be attained through setting a suitable greedy
threshold. This phenomenon reveals the essential importance of the
``greedy criterion'' issue. We also present the theoretical
justification of $\delta$-TOGL.

$\bullet$ We carefully analyze the generalization performance and
computation  cost of $\delta$-TOGL, compared with other  popular
learning strategies such as the   pure greedy learning (PGL)
\cite{Barron2008, Temlaykov2008}, OGL, regularized least squares
(RLS) \cite{Hoerl1970} and fast iterative shrinkage-thresholding
algorithm (FISTA) \cite{Beck2009} through plenty of numerical
studies. The main advantage of $\delta$-TOGL is that it
can reduce the computational cost without sacrificing the
generalization capability. In many applications,  it can learn
hundreds of times faster than conventional methods.

\subsection{Organization}
The rest of the paper is organized as follows. In
Section 2,
we present a brief introduction of statistical learning theory and
greedy learning. In Section 3, we introduce the ``$\delta$-greedy
threshold'' criterion in OGL and provide its feasibility
justification. In Section 4, based on the ``$\delta$-greedy
threshold'' criterion, we propose an adaptive termination  rule and the
corresponding $\delta$-TOGL system. The theoretical feasibility of
the $\delta$-TOGL system is also given in this section. In Section
5, we present numerical simulation experiments to verify our
arguments.  In Section 6, $\delta$-TOGL is tested with real-world data. In Section 7, we provide the detailed proofs of the main
results. Finally, the conclusion is drawn in the last section.

\section{Preliminaries}
In this section, we present some preliminaries  to serve as the basis for the following sections.

\subsection{Statistical learning theory}
Suppose that the samples $\mathbf{z}=(x_{i},y_{i})_{i=1}^{m}$ are
drawn independently and identically from $Z:=X\times Y$ according to
an unknown probability distribution $\rho $ which admits the
decomposition
\begin{equation}
\rho (x,y)=\rho _{X}(x)\rho (y|x).
\end{equation}

 Let $f:X\rightarrow Y$ be an approximation of the underlying relation between the input and output spaces.
A commonly used measurement of the quality of $f$
is the generalization error, defined by
\begin{equation}
\mathcal{E}(f):=\int_{Z}(f(x)-y)^{2}d\rho ,
\end{equation}
which is minimized by the regression function \cite{Cucker2001}
\begin{equation}
f_{\rho }(x):=\int_{Y}yd\rho (y|x).
\end{equation}
The goal of learning is to find a best approximation of the
regression function $f_{\rho }$.

Let $L_{\rho _{_{X}}}^{2}$ be the Hilbert space of $\rho _{X}$
square integrable functions on $X$, with norm $\Vert \cdot \Vert
_{\rho }.$
It is known that, for every $f\in L_{\rho _{X}}^{2}$,
it holds that
\begin{equation}
\mathcal{E}(f)-\mathcal{E}(f_{\rho })=\Vert f-f_{\rho }\Vert _{\rho
}^{2}.
\end{equation}
%
%

Without loss of generality, we  assume $y \in [-M,M]$ almost surely. Thus,
it is reasonable to truncate the estimator to $[-M,M]$. That is, if
we define
\begin{equation}
\pi_Mu:=\left\{\begin{array}{l l}
u,       & \mbox{if}\ |u|\leq M  \\
M\text{sign}(u),& \mbox{otherwise}
\end{array}
\right.
\end{equation}
as the truncation operator,  where $\text{sign}(u)$ represents the sign function of $u$, then
\begin{equation}
\|\pi_Mf_{\bf z}-f_\rho\|^2_\rho\leq \|f_{\bf z}-f_\rho\|^2_\rho.
\end{equation}

\subsection{Greedy learning}

Four most important elements  of greedy learning are  
\textit{dictionary selection}, \textit{greedy criterion}, \textit{iterative strategy} and \textit{termination  rule}. This is essentially different from  greedy approximation which   focuses only on  \textit{dictionary selection} and
\textit{iterative format} issues \cite{Temlaykov2008}. Greedy learning concerns not only the approximation capability, but also the cost, such as the model complexity,  which should pay to achieve a specified approximation accuracy. In a nutshell, greedy learning can be regarded as a four-issue learning scheme.

$\bullet$  \textit{Dictionary selection} : this issue devotes to selecting a suitable dictionary for a given learning task. As a classical topic of greedy approximation, there are a great deal of dictionaries available to greedy learning. Typical examples include the greedy basis \cite{Temlaykov2008}, quasi-greedy basis
\cite{Temlaykov2003}, redundant dictionary \cite{Devore1996},
orthogonal basis \cite{Temlyakov1998}, kernel-based sample dependent dictionary \cite{Chen2013,Lin2013a} and tree  \cite{Friedman2001}.

$\bullet$ \textit{Greedy criterion} : this issue regulates the
criterion to choose a new atom   from the dictionary in each greedy
step. Besides the widely used steepest gradient descent (SGD) method
\cite{Devore1996}, there are also many  methods such as the weak
greedy \cite{Temlaykov2000}, thresholding greedy
\cite{Temlaykov2008} and super greedy  \cite{Liu2012} to quantify
the greedy criterion for  approximation purpose.
However, to the best of our knowledge, only the SGD criterion is employed in greedy learning, since all the results in greedy approximation
\cite{Liu2012,Temlaykov2000,Temlaykov2008} imply that SGD is
superior to other criteria.


$\bullet$ \textit{Iterative format} : this issue focuses on how to
define a new estimator based on the selected atoms. Similar to the
``dictionary selection'', the ``iterative format'' issue is also a
classical topic in greedy approximation. There are several types
of  iterative
schemes \cite{Temlaykov2008}. Among these, three
most commonly used iterative
schemes are   pure greedy \cite{Konyagin1999}, orthogonal
greedy \cite{Devore1996} and relaxed greedy formats \cite{Temlaykov2008a}.
Each iterative format possesses its own
pros and cons \cite{Temlaykov2003,Temlaykov2008}.
For instance, compared with the orthogonal greedy format,  pure and
relaxed greedy formats have benefits in computation but suffer
from either low convergence rate or small applicable scope.

$\bullet$ \textit{Termination rule} : this issue depicts how to
terminate the learning process. The termination  rule is regarded
as the main difference
between greedy approximation and learning, which  has been recently studied 
\cite{Barron2008,Chen2013,Lin2013a,Xu2014}. For example, Barron et
al. \cite{Barron2008} proposed an $l^0$-based complexity
regularization strategy as the termination  rule, and Chen et al. \cite{Chen2013} provided an
$l^1$-based adaptive termination  rule.

Let $H$ be a Hilbert space endowed with norm $\|\cdot\|_H$ and inner
product $\langle\cdot,\cdot\rangle_H$. Let $\mathcal
D=\{g\}_{g\in\mathcal D}$ be a given dictionary satisfying
$\sup_{g\in D,x\in X}|g(x)|\leq 1$. Denote $\mathcal
L_1=\{f:f=\sum_{g\in D}a_gg\}$ as a Banach space endowed with the
norm
\begin{equation}
\|f\|_{\mathcal
    L_1}:=\inf_{\{a_g\}_{g\in\mathcal D}}\left\{\sum_{g\in \mathcal D}|a_g|:f=\sum_{g\in \mathcal
    D}a_gg\right\}.
\end{equation}

There exist several types of  greedy algorithms
\cite{Temlaykov2003}. The three most commonly used are the pure greedy
algorithm (PGA) \cite{Konyagin1999}, orthogonal greedy algorithm (OGA) \cite{Devore1996} and  relaxed
greedy algorithm (RGA) \cite{Temlaykov2008a}.
These algorithms initialize with $f_0:=0$. The new
approximation $f_k\; (k \ge 1)$ is defined based on
$r_{k-1}:=f-f_{k-1}$. In OGA, $f_k$ is defined by
\begin{equation}
f_k=P_{V_{{\bf z},k}} f,
\end{equation}
where $P_{V_{{\bf z},k}}$ is the orthogonal projection onto the space
$V_{{\bf z},k}=\mbox{span}\{g_1,\dots,g_k\}$ and $g_k$ is defined as
\begin{equation}
g_k=\arg\max_{g\in\mathcal D}|\langle r_{k-1},g\rangle_H|.
\end{equation}

Given ${\bf z}=(x_i,y_i)_{i=1}^m$, the empirical inner product and
norm are   defined by
\begin{equation}
\langle f,g\rangle_m:=\frac1m\sum_{i=1}^mf(x_i)g(x_i), 
\end{equation}
and 
\begin{equation}
\|f\|_m^2:=\frac1m\sum_{i=1}^m|f(x_i)|^2.
\end{equation}
Setting $f_{\bf z}^0=0$,  the four aforementioned issues are attended in OGL as follows:

\begin{itemize}
 \item Dictionary selection: Select a suitable dictionary $\mathcal
D_n:=\{g_1,\dots,g_n\}$.

\item Greedy criterion:
\begin{equation}\label{OGAgd}
g_k=\arg\max_{g\in\mathcal D_n}|\langle r_{k-1},g\rangle_m|.
\end{equation}

\item Iteration format:
\begin{equation}
f_{\bf z }^k=P_{V_{{\bf z},k}} f,
\end{equation}
where $P_{V_{{\bf z},k}}$ is the orthogonal projection onto
$V_{{\bf z},k}=\mbox{span}\{g_1,\dots,g_k\}$ in the metric of
$\langle\cdot,\cdot\rangle_m$.

\item   Termination rule: Terminate the learning process
when $k$ satisfies a certain assumption.
\end{itemize}

\section{Greedy criterion in OGL}

Given a real functional $V:  H\rightarrow\mathbf R$, the Fr\'{e}chet
derivative of $V$ at $f$, $V'_f:  H\rightarrow\mathbf R$ is a
linear functional such that for $h\in  H$,
\begin{equation}
\lim_{\|h\|_{
        H}\rightarrow0}\frac{|V(f+h)-V(f)-V'_f(h)|}{\|h\|_{  H}}=0,
\end{equation}
and the gradient of $V$ as a map $\mbox{grad}V: H\rightarrow H$ is
defined by
\begin{equation}
\langle \mbox{grad}V(f),h\rangle_H=V'_f(h),\ \mbox{for
    all}\ h\in  H.
\end{equation}
The greedy criterion adopted in  Eq.(\ref{OGAgd}) is to find $g_k\in \mathcal D_n$
such that
\begin{equation}
\langle
-\mbox{grad}(A_m)(f_{\bf z}^{k-1}),g_k\rangle=\sup_{g\in \mathcal D_n}\langle
-\mbox{grad}(A_m)(f_{\bf z}^{k-1}),g\rangle,
\end{equation}
where $A_m(f)=\sum_{i=1}^m|f(x_i)-y_i|^2$. Therefore, the classical
greedy criterion is based on the steepest  gradient descent (SGD) of
$r_{k-1}$ with respect to the dictionary $\mathcal D_n$. By
normalizing the residual $r_k$, $k=0,1,2,\dots,n$, greedy criterion
in Eq.(\ref{OGAgd}) means to search $g_k$ satisfying
\begin{equation}
g_k=\arg\max_{g\in\mathcal D_n}\frac{|\langle r_{k-1},g\rangle_m|}{\|r_{k-1}\|_m}.
\end{equation}
Geometrically,  the current  $g_k$ minimizes the angle between
$r_{k-1}/\|r_{k-1}\|_m$ and $g$, which is depicted  in  Fig. \ref{IdeaGC}.

Recalling the definition of OGL, it is not difficult to verify
that the angles satisfy
\begin{equation}
|\cos\theta_1|\leq|\cos\theta_2|\leq\cdots\leq|\cos\theta_k|\leq\cdots\leq|\cos\theta_n|,
\end{equation}
or
\begin{equation}
\frac{|\langle r_{0},g_1\rangle_m|}{\|r_{0}\|_m}
\geq
\cdots
\geq
\frac{|\langle r_{k-1},g_k\rangle_m|}{\|r_{k-1}\|_m}
\geq
\cdots
\geq
\frac{|\langle r_{n-1},g_n\rangle_m|}{\|r_{n-1}\|_m},
\end{equation}
since $\frac{|\langle
    r_{k-1},g_k\rangle_m|}{\|r_{k-1}\|_m}=|\cos\theta_k|$. If the
algorithm  stops at the $k$-th iteration, then there exists a
threshold $\delta \in [|\cos\theta_k|,|\cos\theta_{k+1}|]$ to
quantify whether another atom should be added   to construct the
final estimator. To be detailed, if $|\cos{\theta_k}|\geq\delta$,
then $g_k$ is regarded as an ``active atom'' and can be selected to
build the estimator, otherwise, $g_k$ is a ``dead atom '' which
should be discarded.
Based on the above observations and motivated by the Chebshev greedy
algorithm with thresholds  \cite{Temlaykov2008a}, we are
interested in selecting an arbitrary  ``active atom'', $g_k$, in
$\mathcal D_n$, that is
\begin{equation}\label{our metric}
\frac{|\langle r_{k-1},g_k\rangle_m|}{\|r_{k-1}\|_m} >
\delta.
\end{equation}
If there is no $g_k$ satisfying  Eq. (\ref{our metric}),
then the algorithm terminates. We call the greedy criterion  Eq. (\ref{our
metric}) as the ``$\delta$-greedy threshold'' criterion. In practice,
the number of ``active atom'' is usually not unique. We can choose
the first ``active atom'' satisfied  Eq. (\ref{our metric}) at each
greedy iteration to accelerate the algorithm. Once the ``active
atom'' is selected, then the algorithm goes to  the next greedy
iteration and the ``active atom'' is redefined. 

Through such a greedy-criterion, we can develop a new orthogonal greedy learning
scheme, called thresholding orthogonal greedy learning (TOGL). The
two corresponding elements of TOGL can be reformulated as follows:

\begin{itemize}
\item  Greedy definition: Let $g_k$ be an arbitrary (or the first) atom from
$\mathcal D_n$ satisfying  Eq. (\ref{our metric}).


\item Termination rule: Terminate  the learning process
either there is no atom satisfying  Eq. (\ref{our metric}) or $k$
satisfies a certain assumption.
\end{itemize}

Without considering the termination  rule, the classical greedy criterion  Eq. (\ref{OGAgd}) in OGL always selects the greediest  atom at
each greedy iteration. However,
 Eq. (\ref{our metric}) slows down the  speed of gradient descent and
therefore may conduct a more flexible model selection strategy.
According to the bias and variance balance principle
\cite{Cucker2007}, the bias decreases while the variance increases
as a new atom is selected to build the estimator.  If a
lower-correlation atom  is added, then the bias decreases slower and
the variance also increases slower. Then, the  balance  can be
achieved in TOGL   within a  more gradual  flavor than OGL. Moreover, 
 Eq. (\ref{our metric})  also provides a terminate condition that if
all atoms, $g$, in $\mathcal D_n$ satisfy
\begin{equation}\label{Stop 1}
\frac{|\langle r_{k-1},g\rangle_m|}{\|r_{k-1}\|_m}  \le
\delta,
\end{equation}
then the algorithm terminates. The termination  rule concerning
$k$ in TOGL is  necessary and is used to avoid certain extreme cases in practice.
Indeed, using only the  terminate   condition  Eq. (\ref{Stop 1}) may
drive the algorithm to select all atoms from $\mathcal D_n$.  As  Fig. \ref{FSTC1} shows, if the target function $f$ is almost orthogonal to the
space spanned by the dictionary and  atoms in the dictionary  are
almost linear dependent, then the selected
$\delta$ should be too small to distinguish which  is the ``active
atom ''. Consequently, the corresponding learning scheme selects all
 atoms of dictionary and therefore degrades the generalization
capability of OGL.
\begin{figure}[H]
    \centering
    \includegraphics[height=3cm,width=7cm]{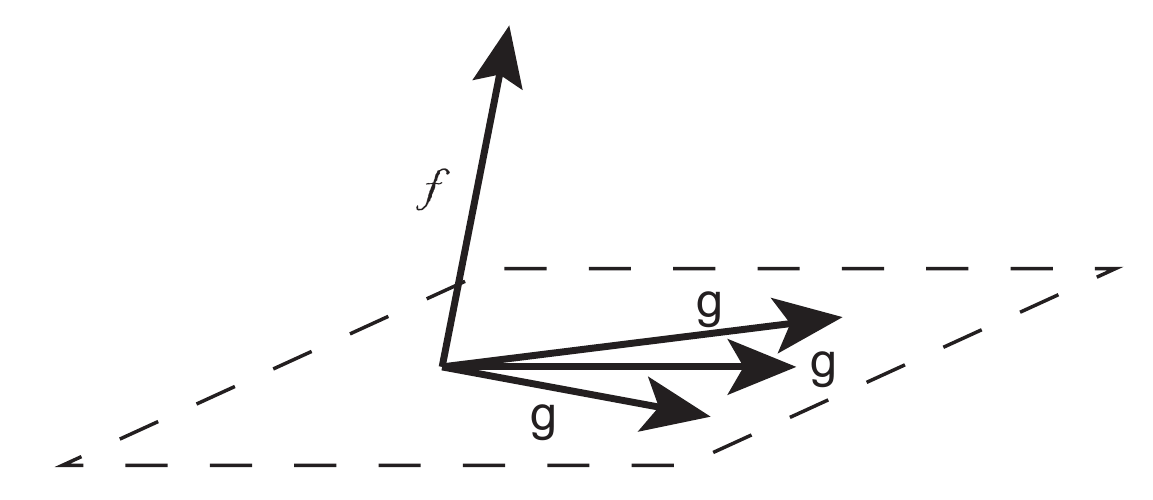}
    \caption{The necessity of termination  rule concerning
    	$k$ in TOGL.}  \label{FSTC1}
\end{figure}
Now we present a theoretical assessment of TOGL.
At first, we give a few notations and concepts, which will be used in the rest part of the paper.
For $r>0$, the space $\mathcal L_{1,\mathcal D_n}^r$ is defined to
be the set of all functions $f$ such that,  there exists a
$h\in\mbox{span}\{\mathcal D_n\}$ satisfying
\begin{equation}\label{prior}
\|h\|_{\mathcal L_1(\mathcal D_n)}\leq\mathcal B, \ \mbox{and}\
\|f - h\| \leq {\mathcal  B}{n^{ - r}},
\end{equation}
where $\|\cdot\|$ denotes the uniform norm for the continuous
function space $C(X)$. The infimum of all  $\mathcal B$  satisfying  Eq. (\ref{prior}) defines
a norm (for $f$ ) on $\mathcal L_{1,\mathcal D_n}^r$.  The   Eq. (\ref{prior}) defines an interpolation
space and is a natural assumption for the regression function in
greedy learning  \cite{Barron2008}. This assumption has already been adopted  to analyze the learning capability
of greedy learning \cite{Barron2008,Lin2013a,Xu2014}. The  Theorem \ref{THEOREM1} illustrates
the performance of TOGL and consequently, reveals the feasibility of
the greedy criterion in  Eq. (\ref{our metric}).

\begin{theorem}\label{THEOREM1}
    Let $0<t<1$, $0<\delta\leq 1/2$,  and $f_{\bf z}^{k,\delta}$  be the
    estimator deduced by TOGL. If $f_\rho\in \mathcal L_{1,\mathcal D_n}^r$, then there
    exits a ${k^*} \in \mathbf N$ such that
    $$
    \begin{aligned}
    & {\cal E}({\pi _M}f_{\bf{z}}^{{k^*},\delta}) - {\cal E}({f_\rho }
    ) \le \\
    & C{{\cal B}^2}({(m{\delta ^2})^{ - 1}}\log m\log\frac{1}{\delta }\log\frac{2}{t}
    + {\delta ^2} + {n^{ - 2r}})
    \end{aligned}
    $$
    holds  with probability at least $1-t$, where $C$ is a positive
    constant depending only on $d$ and $M$.
\end{theorem}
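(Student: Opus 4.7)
The plan is to follow the bias--variance decomposition used for orthogonal greedy learning in Barron et al.\ and to adapt it to the $\delta$-thresholded variant. I would write
\[
\mathcal{E}(\pi_M f_{\mathbf{z}}^{k^*,\delta}) - \mathcal{E}(f_\rho) \le \mathcal{S}(\mathbf{z},k^*) + \mathcal{A}(k^*,\delta,n),
\]
where $\mathcal{S}$ is a sample-error term that captures the generalization gap uniformly over the hypothesis class reachable by $\delta$-TOGL after at most $k^*$ iterations, and $\mathcal{A}$ is a deterministic bias term. The assumption $f_\rho\in\mathcal{L}_{1,\mathcal{D}_n}^r$ supplies an explicit comparison function $h=\sum_{g}a_g g\in\mathrm{span}(\mathcal{D}_n)$ with $\sum_g|a_g|\le\mathcal{B}$ and $\|f_\rho-h\|_\infty\le\mathcal{B}n^{-r}$, which immediately contributes the $\mathcal{B}^2n^{-2r}$ summand of the stated bound via $\|f_\rho-h\|_\rho^2\le\|f_\rho-h\|_\infty^2$.

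For the bias term I would split by termination scenario. If the algorithm halts at some $k\le k^*$ because no atom satisfies the threshold, then $\sup_{g\in\mathcal{D}_n}|\langle r_k,g\rangle_m|\le\delta\|r_k\|_m$. Expanding $\langle r_k,h\rangle_m=\sum_g a_g\langle r_k,g\rangle_m$ under this bound and using the orthogonality $\langle r_k,f_{\mathbf{z}}^{k}\rangle_m=0$ yields the key inequality $\|r_k\|_m\le\|y-h\|_m+\mathcal{B}\delta$, which after an AM--GM absorption of the cross term controls the empirical excess risk $\mathcal{E}_{\mathbf{z}}(f_{\mathbf{z}}^{k,\delta})-\mathcal{E}_{\mathbf{z}}(h)$ at the order $\mathcal{O}(\mathcal{B}^2\delta^2)$. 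If instead the algorithm exhausts its budget without triggering the threshold, each step obeys $\|r_k\|_m^2<(1-\delta^2)\|r_{k-1}\|_m^2$, giving the geometric decay $\|r_{k^*}\|_m^2\le(1-\delta^2)^{k^*}M^2$, so choosing $k^*=\lceil C\log(1/\delta)/\delta^2\rceil$ again forces the residual to $\mathcal{O}(\mathcal{B}^2\delta^2)$. In either case $\mathcal{A}$ is of the claimed order $\mathcal{B}^2(\delta^2+n^{-2r})$.

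For the sample term I would bound, uniformly over the TOGL hypothesis class $\mathcal{H}^\delta_{n,k^*}$ consisting of orthogonal projections of $y$ onto subspaces spanned by at most $k^*$ atoms drawn from $\mathcal{D}_n$, the one-sided deviation $\mathcal{E}(\pi_M f)-\mathcal{E}_{\mathbf{z}}(\pi_M f)$. A covering-number estimate in the spirit of Barron et al.\ yields $\log\mathcal{N}(\mathcal{H}^\delta_{n,k^*},\epsilon,\|\cdot\|_\infty)=\mathcal{O}(k^*\log(n\mathcal{B}/\epsilon))$, and combined with the truncation $\pi_M$ and a ratio-type Bernstein inequality (as in the Cucker--Smale framework) this produces $\mathcal{S}(\mathbf{z},k^*)=\mathcal{O}(\mathcal{B}^2 k^*\log m\log(2/t)/m)$ with probability at least $1-t$. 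Substituting $k^*=\lceil C\log(1/\delta)/\delta^2\rceil$ yields exactly the $\mathcal{B}^2\log m\log(1/\delta)\log(2/t)/(m\delta^2)$ term of the theorem.

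The main obstacle I expect is the second step, namely synchronising the two termination regimes so that both produce the sharp $\mathcal{O}(\mathcal{B}^2\delta^2)$ empirical bias rather than the cruder $\mathcal{O}(\mathcal{B}\delta)$ that falls out of a first-order use of the threshold inequality. This forces a careful AM--GM absorption of the cross term $\mathcal{B}\delta\|y-h\|_m$ into $\|y-h\|_m^2+\mathcal{B}^2\delta^2$, followed by a ratio-type empirical-to-population transfer that also accounts for the logarithmic-in-$m$ factor in the final bound. A secondary technical issue is that the covering number of $\mathcal{H}^\delta_{n,k^*}$ is affected by the data-dependent atom-selection process; the standard workaround is to dominate this class by the data-independent class of all orthogonal projections onto $k^*$-dimensional subspaces spanned by atoms of $\mathcal{D}_n$, paying only a $\log n$ factor that can be swallowed into the $\log m$ factor under the mild regime $n\le m^{O(1)}$.
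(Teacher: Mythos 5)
Your proposal matches the paper's argument at every essential point: the same three-way decomposition (approximation, empirical/hypothesis, and sample error), the same use of the interpolation-space function $h$ with $\|h\|_{\mathcal L_1(\mathcal D_n)}\le\mathcal B$ and $\|f_\rho-h\|\le\mathcal B n^{-r}$ to produce the $n^{-2r}$ term, the same key inequality $\|r_k\|_m\le\|y-h\|_m+\mathcal B\delta$ obtained from the threshold-failure condition via $\langle r_k,h\rangle_m\le\|h\|_{\mathcal L_1(\mathcal D_n)}\max_{g}|\langle r_k,g\rangle_m|$ and orthogonality of the residual (this is precisely Lemma \ref{LEMMA2}), and the same cap $k^*=\mathcal O(\delta^{-2}\log(1/\delta))$ on the number of selected atoms feeding a covering-number concentration bound for the sample error. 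The differences are organizational rather than substantive: the paper derives Theorem \ref{THEOREM1} as a corollary of Theorem \ref{THEOREM2} and imports its three estimates from the literature --- the atom count from Temlyakov (Constr.\ Approx.\ 2008, Thm.\ 4.1), which you re-derive elementarily from the per-step contraction $\|r_k\|_m^2\le(1-\delta^2)\|r_{k-1}\|_m^2$; the approximation and fixed-function concentration bounds from Lin et al.\ via an auxiliary relaxed-greedy comparison sequence $f_k^*$, whereas you compare directly to $h$ (simpler, and sufficient here since $\mathcal E(h)-\mathcal E(f_\rho)\le\mathcal B^2n^{-2r}$); and the uniform concentration over $k$-sparse projections from Xu et al. Your route is therefore more self-contained but otherwise the same. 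One point you must make explicit rather than defer: squaring $\|r_k\|_m\le\|y-h\|_m+\mathcal B\delta$ yields only $\|r_k\|_m^2\le2\|y-h\|_m^2+2\mathcal B^2\delta^2$, so the empirical excess risk relative to $h$ is not literally $\mathcal O(\mathcal B^2\delta^2)$; the leading factor $2$ on $\|y-h\|_m^2$ has to be traded against the $\tfrac12$-coefficients that the ratio-type concentration inequalities leave on the population excess risk (the paper's own step $\mathcal P({\bf z},k,\delta)\le2\mathcal B\delta^2$ elides exactly the same issue), so the bookkeeping you correctly identify as the ``main obstacle'' genuinely needs to be carried out rather than assumed away.
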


If $\delta= \mathcal O (m^{-1/4})$, and the size of dictionary, $n$,
is selected to be large enough, i.e., $n \geq \mathcal
O({m^{\frac{1}{{4r}}}})$, then
Theorem {\ref{THEOREM1}}
shows that the
generalization error  of ${\pi_M}f_{\bf{z}}^{{k^*},\delta}$ is asymptotic to $\mathcal O
(m^{-1/2}(\log m)^2)$. Up to a logarithmic factor, this bound is the
same as that in \cite{Barron2008} and is the ``record'' of OGL. This
implies that weakening the level of greed in OGL is a feasible way
to avoid traversing the dictionary. It should also be pointed out
that different from OGL \cite{Barron2008}, there are two parameters,
$k$ and $\delta$, in TOGL. Therefore, Theorem {\ref{THEOREM1}} only
presents a theoretical verification that introducing the
``$\delta$-greedy threshold'' to measure the level of greed does not
essentially degrade the generalization capability of OGL. Taking the
practical applications into account, eliminating the condition
concerning $k$ in the termination  rule  is crucial.
This is the scope
of the following section, where an adaptive  termination  rule with
respect to $\delta$ is presented.

\section{$\delta$-thresholding orthogonal greedy learning}

In the previous section, we developed a new greedy learning scheme called as thresholding orthogonal greedy learning (TOGL) and theoretically verified its feasibility. However, there are two main parameters (i.e., the value of threshold $\delta$ and iteration $k$) should be simultaneously fine-tuned. 
It puts more pressure on  parameter selection, which may dampen the spirits of
practitioners.
Given this, we further propose an adaptive
termination  rule only based on the value of threshold.
Notice that,  the value $\|r_{k-1}\|_m/\|y(\cdot)\|_m$ becomes smaller and smaller along the selection of more and more ``active'' atoms, where
$y(\cdot)$ is a function satisfying $y(x_i)=y_i, i=1,\dots,m$. Then,
an advisable  terminate condition is to use  $\delta$ to
quantify $\| r_{k-1}\|_m/\|y(\cdot)\|_m$. Therefore, we append
another terminate condition as
\begin{equation}\label{Our metric2}
\| r_{k-1}\|_m \leq \delta\|y(\cdot)\|_m
\end{equation}
to replace the previous terminate condition concerning $k$ in TOGL.
Based on it,  a new termination  rule can be obtained:
\begin{itemize}
\item  Termination rule: Terminate the learning process if
either Eq. (\ref{Our metric2}) holds or there is no atom satisfying
 Eq. (\ref{our metric}). That is:
    \begin{equation}\label{Our metric3}
    \max_{g\in \mathcal D_n}|\langle r_{k},g\rangle_m|\leq\delta\|r_k\|_{m} \ \text{or} \
    \|r_k\|_m\leq\delta\|f\|_m.
    \end{equation}
\end{itemize}
For such  a change,  we present  a new learning system named the
$\delta$-thresholding orthogonal greedy learning ($\delta$-TOGL)  as
the  Algorithm 1.
%
\begin{algorithm}[H]
    \caption{$\delta$-TOGL}\label{DTOGL}
    \begin{algorithmic}
        \STATE {{\textbf{Step 1 (Initialization)}}:\\
            Given data ${\bf z}=(x_i,y_i)_{i=1}^m$ and dictionary $\mathcal D_n$.\\
            Given a proper greedy threshold $\delta$.\\
            Set initial estimator $f_0=0$ and iteration $k:=0$.}
        \STATE {{\textbf{Step 2 ($\delta$-greedy threshold)}}:\\
             Select $g_k$ be an
            arbitrary atom from $\mathcal D_n$ satisfying
            $$
            \frac{|\langle r_{k-1},g_k\rangle_m|}{\|r_{k-1}\|_m} >
            \delta.
            $$
        }
        \STATE {\textbf{{Step 3 (Orthogonal projection)}}: \\
            Let
            $V_{{\bf z},k} =\mbox{Span}\{g_1,\dots,g_{k}\}$. Compute  $f^{\delta}_{{\bf z}}$ as:
            $$
            {f^{\delta}_{{\bf z}}} = {P_{{\bf z},V_{{\bf z},k}}}({  y}).
            $$
            The residual:
            $r_{k}:=y-f^{\delta}_{{\bf z}},$
            where $P_{{\bf z},V_{{\bf z},k}}$ is the orthogonal projection  onto
            space $V_{{\bf z},k}$ in the criterion of
            $\langle\cdot,\cdot\rangle_m$.}
        \STATE {\textbf{{Step 4 (Termination rule)}}:\\ 
        	If termination rule satisfied as:
            $$
            \max_{g\in \mathcal D_n}|\langle r_{k},g\rangle_m|\leq \delta\|r_k\|_{m} \ \text{or} \
            \|r_k\|_m\leq\delta\|f\|_m,
            $$
            then the algorithm terminates and outputs final estimator $f_{\bf z}^\delta$. \\           
            Otherwise, turn to Step 2 and  $k:=k+1$.
            }
    \end{algorithmic}
\end{algorithm}

	The implementation of OGL requires traversing the dictionary, which has a complexity of $\mathcal O(mn)$.
    Inverting a $k \times k$
	matrix  in orthogonal projection has a complexity of   $\mathcal O(k^3)$. Thus, the $k$th iteration of OGL has a complexity of $ \mathcal O(mn + k^3)$. In Step
	2 of  $\delta$-TOGL, $g_k$ is an arbitrary atom from $\mathcal D_n$
	satisfying the ``$\delta$-greedy threshold'' condition. 
	It motivates us to select the first atom from $\mathcal D_n$
	satisfying  Eq. (\ref{our metric}).
	Then the complexity of
	$\delta$-TOGL is  smaller than $\mathcal O(mn+k^3)$. In fact, it
	usually requires a complexity of $\mathcal O(m+k^3)$, and  gets a
	complexity of $\mathcal O(mn+k^3)$ only for the worst case.
	$\delta$-TOGLR essentially reduces the  complexity of OGL,
	especially when $n$ is large. The memory requirements of OGL and
	$\delta$-TOGL are $\mathcal O(mn)$.

The following theorem shows
that if $\delta$ is appropriately tuned, then the $\delta$-TOGL
estimator $f_{\bf z}^\delta$ can realize the (almost) optimal
generalization capability of OGL and TOGL.

\begin{theorem}\label{THEOREM2}
    Let $0<t<1$, $0<\delta\leq 1/2$,  and $f_{\bf z}^\delta$  be defined
    in Algorithm 1. If $f_\rho\in \mathcal L_{1,\mathcal D_n}^r$, then the inequality
$$
    \begin{aligned}
        &\mathcal E(\pi_Mf_{\bf z}^\delta)-\mathcal E(f_\rho)
        \leq \\
        & C{{\cal B}^2}({(m{\delta ^2})^{ - 1}}\log m
        \log\frac{1}{\delta }\log\frac{2}{t} + {\delta ^2} + {n^{ - 2r}})
    \end{aligned}
$$
    holds  with probability at least $1-t$, where $C$ is a positive
    constant depending only on $d$ and $M$.
\end{theorem}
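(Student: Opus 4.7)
The plan is to reduce Theorem~\ref{THEOREM2} to Theorem~\ref{THEOREM1} by showing that the adaptive termination rule in Eq.~(\ref{Our metric3}) triggers at an iteration index that is essentially no larger than the oracle choice $k^{*}$ from Theorem~\ref{THEOREM1}, and that early termination (via $\|r_k\|_m \le \delta\|f\|_m$) already delivers a small empirical residual. The starting point is the familiar decomposition
\[
\mathcal E(\pi_M f_{\bf z}^\delta)-\mathcal E(f_\rho) \;\le\; \underbrace{\mathcal E(\pi_M f_{\bf z}^\delta)-\mathcal E_{\bf z}(f_{\bf z}^\delta)+\mathcal E_{\bf z}(h)-\mathcal E(h)}_{\text{sample error}} \;+\; \underbrace{\mathcal E(h)-\mathcal E(f_\rho)}_{\text{approximation error}},
\]
where $h\in\text{span}(\mathcal D_n)$ is the $\mathcal L_1$ approximant from Eq.~(\ref{prior}), contributing the term $\mathcal B^{2}n^{-2r}$.

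The first major step is to control the random stopping index $K = K(\delta,{\bf z})$. Because each selected atom satisfies $|\langle r_{k-1},g_k\rangle_m|>\delta\|r_{k-1}\|_m$, a standard weak-orthogonal-greedy recursion (mirroring the convergence analysis of the Chebyshev greedy algorithm with thresholds in \cite{Temlaykov2008a}, transported to $\langle\cdot,\cdot\rangle_m$) gives
\[
\|r_k\|_m^2 \;\le\; \|r_{k-1}\|_m^2 - \delta^{2}\|r_{k-1}\|_m^{2}\cdot\text{(progress factor)},
\]
leading to the iteration-count bound $K \le K_{\max} := \lceil C\mathcal B^{2}/\delta^{2}\rceil$. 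This is crucial: the hypothesis class actually used by Algorithm~1 lies in the finite union $\bigcup_{k\le K_{\max}}\{P_{V_{{\bf z},k}}y:\,g_1,\dots,g_k\in\mathcal D_n\}$, whose complexity is controllable.

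The second step is to bound the sample error uniformly over $k\le K_{\max}$. For each fixed $k$, the class of estimators $P_{V_{{\bf z},k}}y$ truncated by $\pi_M$ admits a covering-number estimate of order $k\log(n/\varepsilon)$, and a Bernstein-type concentration inequality (as in the proofs underlying \cite{Barron2008,Xu2014}) produces a deviation term of order $k\log m\log(2/t)/m$. Plugging in $k\le K_{\max}=\mathcal O(\mathcal B^{2}\delta^{-2})$ and absorbing the extra $\log(1/\delta)$ factor from the union bound over $k$ yields the $(m\delta^{2})^{-1}\log m\log(1/\delta)\log(2/t)$ contribution. The residual term $\delta^{2}$ in the theorem arises from the termination criterion itself: once the algorithm stops, either $\|r_K\|_m^{2}\le\delta^{2}\|f\|_m^{2}$ directly, or every atom satisfies the $\delta$-threshold, which by the same weak-greedy analysis bounds the empirical bias by $\delta^{2}\|h\|_{\mathcal L_1}^{2}$ up to constants.

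The main obstacle I expect is handling the data-dependent stopping time. Unlike Theorem~\ref{THEOREM1}, where the oracle $k^{*}$ is chosen deterministically to balance bias and variance, here $K$ is a random function of $\bf z$ and may in principle fall anywhere in $[1,K_{\max}]$. The delicate point is to verify that the stopping criterion in Eq.~(\ref{Our metric3}) cannot trigger \emph{too early}: the event $\{\|r_k\|_m\le\delta\|f\|_m\}$ must, on a high-probability set, coincide with the empirical realization of a true regression residual of the same order. This is handled by first showing concentration of $\|y(\cdot)\|_m$ around $\|f_\rho\|_{\rho}$ plus noise variance, and then using the $\mathcal L_1$ assumption to translate the empirical stopping condition into an $L_{\rho}^{2}$ bound. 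Once this is in place, the two branches of the termination rule combine with the uniform sample-error bound to reproduce, up to constants, exactly the rate stated in Theorem~\ref{THEOREM1}.
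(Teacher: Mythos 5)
Your overall strategy --- bound the number of selected atoms via the threshold, use the termination rule to control the empirical residual, then run a covering-number/concentration argument over the resulting low-dimensional classes --- is the same skeleton as the paper's proof (which, note, proves Theorem~\ref{THEOREM2} directly and obtains Theorem~\ref{THEOREM1} as a corollary, the reverse of your proposed reduction; the reduction buys you nothing because you still need the error bound at the random stopping index). The paper packages the two algorithmic facts into one lemma: at most $C\delta^{-2}\log\frac{1}{\delta}$ atoms are ever selected (quoted from Temlyakov's relaxation paper), and for every $h\in\mbox{span}(\mathcal D_n)$ the termination rule, whichever branch fires, yields $\|y-f_{\bf z}^\delta\|_m^2\le 2\|y-h\|_m^2+2\delta^2\|h\|_{\mathcal L_1(\mathcal D_n)}$. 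Your worry about stopping ``too early'' is a non-issue in this scheme: termination via $\|r_k\|_m\le\delta\|y\|_m$ only makes the empirical residual smaller, and the uniform concentration over $k\le K_{\max}$ already converts that into a population bound; no separate concentration of $\|y(\cdot)\|_m$ around $\|f_\rho\|_\rho$ is needed.

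The genuine gap is in your opening decomposition. Writing $\mathcal E(\pi_Mf_{\bf z}^\delta)-\mathcal E(f_\rho)\le\bigl[\mathcal E(\pi_Mf_{\bf z}^\delta)-\mathcal E_{\bf z}(f_{\bf z}^\delta)+\mathcal E_{\bf z}(h)-\mathcal E(h)\bigr]+\bigl[\mathcal E(h)-\mathcal E(f_\rho)\bigr]$ silently assumes $\mathcal E_{\bf z}(f_{\bf z}^\delta)\le\mathcal E_{\bf z}(h)$, which is false: $f_{\bf z}^\delta$ is not an empirical risk minimizer over $\mbox{span}(\mathcal D_n)$. You must carry an explicit hypothesis-error term $\mathcal E_{\bf z}(f_{\bf z}^\delta)-\mathcal E_{\bf z}(h)$, and what the termination rule actually delivers is $\|y-f_{\bf z}^\delta\|_m\le\|y-h\|_m+\delta\|h\|_{\mathcal L_1(\mathcal D_n)}$; squaring leaves a multiplicative constant in front of $\|y-h\|_m^2$ rather than the purely additive $O(\delta^2)$ you assert, and $\mathcal E_{\bf z}(h)\approx\mathcal E(f_\rho)+O(n^{-2r})$ is not small, so this step needs genuine care (the paper handles it by comparing against a $k$-term relaxed-greedy surrogate $f_k^*$ with $\|f_k^*\|_{\mathcal L_1(\mathcal D_n)}\le\mathcal B$ rather than against $h$ itself). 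Separately, your iteration bound $K_{\max}=\lceil C\mathcal B^2/\delta^2\rceil$ does not follow from the recursion you wrote: combining $\|r_k\|_m^2\le(1-\delta^2)\|r_{k-1}\|_m^2$ with the not-yet-stopped condition $\|r_k\|_m>\delta\|y\|_m$ gives $K\le C\delta^{-2}\log\frac{1}{\delta}$, and it is this intrinsic logarithm --- not a union bound over $k$ --- that produces the $\log\frac{1}{\delta}$ factor in the stated rate.
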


If $n \geq \mathcal O({m^{\frac{1}{{4r}}}})$  and $\delta=
\mathcal O (m^{-1/4})$, then the learning rate in Theorem  \ref{THEOREM2}
asymptotically equals to $\mathcal O (m^{-1/2}(\log m)^2)$, which is
the same as that of Theorem  \ref{THEOREM1}. Therefore, Theorem
\ref{THEOREM2} implies that using  Eq. (\ref{Our metric2}) to replace the terminate condition concerning $k$  is theoretically feasible.
The most important  highlight of Theorem \ref{THEOREM2} is that it provides a totally different way to circumvent the overfitting phenomenon of OGL. The termination  rule is crucial for OGL, but designing an
effective  termination  rule is a tricky problem.
All the aforementioned studies \cite{Barron2008,Chen2013,Xu2014} of
the termination  rule attempted to design a termination  rule by
controlling the number of iterations directly. Since the
generalization capability of OGL is sensitive to the number of
iterations, the results are at times inadequate. The termination  rule employed in the present paper is based
on the study of the ``greedy-criterion'' issue of greedy learning.
Theorem \ref{THEOREM2} shows that, besides controlling the number of
iterations directly, setting a greedy threshold to redefine the
greedy criterion  can also conduct  an effective termination  rule. 
Theorem  \ref{THEOREM2} implies that this new termination  rule theoretically
works as well as others. Furthermore, when compared with $k$ in OGL,
the generalization capability of the $\delta$-TOGL is stable to
$\delta$,  since the new criterion slows down the changes of bias
and variance.

\section{Simulation verifications}

In this section, a series of simulations  are  carried out to
verify  our theoretical assertions.
Firstly, we introduce the simulation settings, including the data
sets, dictionary, greedy criteria and experimental environment. 
Secondly, we analyze the relationship between the greedy criteria and  generalization performance in orthogonal greedy learning (OGL) and demonstrate that steepest gradient descent (SGD) is not the unique greedy criterion. 
Thirdly, we present a performance comparison of different greedy criteria and
illustrate the ``$\delta$-greedy threshold'' is feasible.
Fourthly,  we empirically study the  performance  of $\delta$-thresholding orthogonal greedy learning ($\delta$-TOGL) and justify the feasibility of it.
Finally,  we compare $\delta$-TOGL with other widely used dictionary-based learning methods and show it is a promising learning scheme.

\subsection{Simulation settings} Throughout the simulations, let ${\bf
    z}=\{(x_{i},y_{i})\}_{i=1}^{m_1}$ be the
training samples with $\{x_i\}_{i=1}^{m_1}$ being drawn
independently and identically according to the uniform distribution
on $[-\pi,\pi]$ and   $y_{i}=f_{\rho }(x_{i})+\mathcal
N(0,\sigma^2), $ where
$$
{f_\rho}(x) = \frac{{\sin x}}{x}, \quad x \in [ - \pi ,\pi ].
$$
Four levels of noise:  $\sigma_1=0.1$, $\sigma_2=0.5$, $\sigma_3=1$
and $\sigma_4=2$ are used in the simulations. The learning
performance (in terms of root mean squared error (RMSE)) of
different algorithms
are then tested by applying the resultant estimators to the test set ${\bf z}_{test}=%
\{(x_{i}^{(t)},y_{i}^{(t)})\}_{i=1}^{m_2}$, which is similarly generated
 as ${\bf z}$ but with a promise that $y_{i}$ are  taken to be $%
y_{i}^{(t)}=f_{\rho }(x_{i}^{(t)}).$

In each simulation,  we use the Gaussian radial basis function (RBF)
\cite{Chen1991} to build up the dictionary:
$$
\left\{e^{-\|x-t_i\|^2/\eta^2}: i=1, \ldots,n\right\},
$$
where $\{t_i\}_{i=1}^n$ are drawn according to the uniform
distribution in  $[-\pi,\pi]$. Since the aim of each simulation is
to compare $\delta$-TOGL with other learning methods on the same
dictionary, we just set $\eta=1$  throughout the simulations.

We use four  different  criteria to select the new atom in each greedy iteration:
$$
{g_k}: = \arg \mathop {\max }\limits_{g \in \mathcal D_n} | \langle {r_{k - 1}},g \rangle_m
|,
$$
$$
{g_k}: = \arg \mathop {\text{second} \max }\limits_{g \in \mathcal D_n} | \langle {r_{k - 1}},g \rangle_m
|,
$$
$$
{g_k}: = \arg \mathop {\text{third} \max }\limits_{g \in \mathcal D_n} | \langle {r_{k - 1}},g\rangle_m |,
$$
and
$$
{g_k} \ \text{randomly  selected  from} \
\mathcal{D}_n.
$$
Here, $\arg \mathop { \text{second} \max }\limits_{}$ and
$\arg \mathop { \text{third} \max }\limits_{}$  mean the values of 
$|\langle r_{k-1},g\rangle_m|$  reach the second and third largest values, respectively. 
Randomly selected means to randomly select $g_k$ from the dictionary. 
We use four abbreviations OGL1,  OGL2, OGL3 and OGLR to 
to denote the corresponding learning schemes, respectively.


Let $\mathcal D_{n,k,\delta}$ be the
set of atoms of $\mathcal D_n$ satisfying $ \frac{|\langle
r_{k-1},g_k\rangle_m|}{\|r_{k-1}\|_m} > \delta$. Four corresponding
criteria are employed as following:
$$
{g_k}: = \arg \mathop {\max }\limits_{g \in \mathcal D_{n,k,\delta}}
| \langle {r_{k - 1}},g \rangle_m |,
$$
$$
{g_k}: = \arg \mathop {\text{second} \max }\limits_{g \in \mathcal
D_{n,k,\delta}} | \langle {r_{k - 1}},g \rangle_m |,
$$
$$
{g_k}: = \arg \mathop {\text{third}  \max }\limits_{g \in \mathcal
D_{n,k,\delta}} | \langle {r_{k - 1}},g\rangle_m |,
$$
and
$$
{g_k}= \text{First}  (D_{n,k,\delta}).
$$

Here $\text{First}  (D_{n,k,\delta})$ denotes the first atom of $\mathcal D_n$
satisfying $ \frac{|\langle r_{k-1},g_k\rangle_m|}{\|r_{k-1}\|_m} > \delta$.
We also use TOGL1 (or $\delta$-TOGL1), TOGL2 (or $\delta$-TOGL2), TOGL3 (or $\delta$-TOGL3) and TOGLR (or $\delta$-TOGLR)  
to denote the corresponding algorithms.


All numerical studies are implemented by MATLAB R2015a on a Windows
personal computer with Core(TM) i7-3770 3.40GHz CPUs and RAM
16.00GB.  All the statistics are averaged based on 10 independent
trails.

\subsection{Greedy criteria in OGL}

In this section, we examine the role of the greedy
criterion in OGL via comparing the performance of OGL1, OGL2, OGL3
and OGLR.  Let $m_1=1000$, $m_2=1000$ and $n=300$ throughout this
subsection. Fig. \ref{OGA}  shows the performance of OGL with four
different greedy criteria.
\begin{figure*}[htb]
    \centering
    \includegraphics[height=5cm,width=19cm]{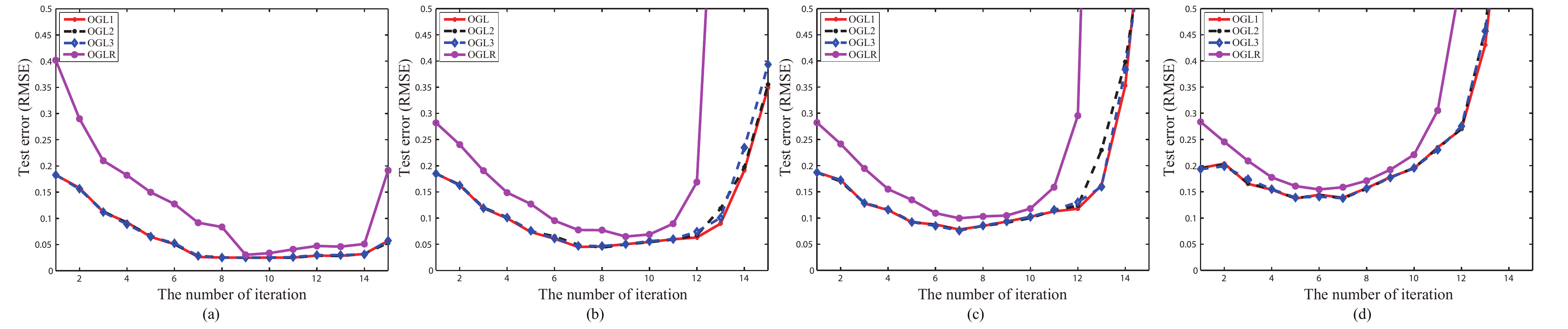}
    \caption{The generalization performance of OGL with
    	four different greedy criteria. (a) The noise level $\sigma_1=0.1$. (b) $\sigma_2=0.5$. (c) $\sigma_3=1$. (d) $\sigma_4=2$.}\label{OGA}
\end{figure*}
We  observe that OGL1, OGL2 and OGL3 have similar performance,
while OGLR performs worse. This shows that SGD is not the
unique greedy criterion and shows the necessity to study the
``greedy criterion'' issue.
 Detailed comparisons   are
listed in the  Table \ref{OGATab}. Here TestRMSE and
$k_{OGL}^*$ denote the theoretically optimal RMSE and number of iteration, where the parameter $k$ is selected according to the test data directly.

\begin{table}[htb]
   \renewcommand{\arraystretch}{1.3}
    \begin{center}
        \caption{ Quantitive comparisons of OGL with   different greedy criteria.}\label{OGATab}
            \begin{tabular}{|c|c|c|c|c|c|} \hline
                Methods & TestRMSE   & ${k_{OGL}^*}$ & Methods & TestRMSE  & ${k_{OGL}^*}$   \\   \hline
                \multicolumn{3}{|c|}{$\sigma=0.1$}  & \multicolumn{3}{|c|}{$\sigma=0.5$}  \\ \hline
                OGL1   &0.0249  &9   &  OGL1   &0.0448  &7 \\ \hline
                OGL2   &0.0248  &9   &  OGL2   &0.0436  &8 \\ \hline
                OGL3   &0.0251  &10  &  OGL3   &0.0466  &8 \\ \hline
                OGLR   &0.0304  &9   &  OGLR   &0.0647  &9 \\ \hline
                Methods & TestRMSE   & ${k_{OGL}^*}$ & Methods & TestRMSE  & ${k_{OGL}^*}$   \\   \hline
                \multicolumn{3}{|c|}{$\sigma=1$}  & \multicolumn{3}{|c|}{$\sigma=2$}  \\ \hline
                OGL1   &0.0780  &7  &   OGL1   &0.1371  &5  \\ \hline
                OGL2   &0.0762  &7  &   OGL2   &0.1374 &7   \\ \hline
                OGL3   &0.0757 &7   &   OGL3   &0.1377 &7   \\ \hline
                OGLR   &0.0995  &7  &   OGLR   &0.1545  &6  \\ \hline
            \end{tabular}
    \end{center}
\end{table}

\subsection{Feasibility of ``$\delta$-greedy threshold''}

In this simulation, we aim at verifying the feasibility of the
``$\delta$-greedy threshold'' criterion.
 For this purpose, we select optimal $k$ according to the test data
directly and compare different greedy criteria satisfying Eq. (\ref{our
metric}). Fig. \ref{TOGA} shows the simulation
results.
\begin{figure*}[htb]
    \centering
    \includegraphics[height=5.2cm,width=19cm]{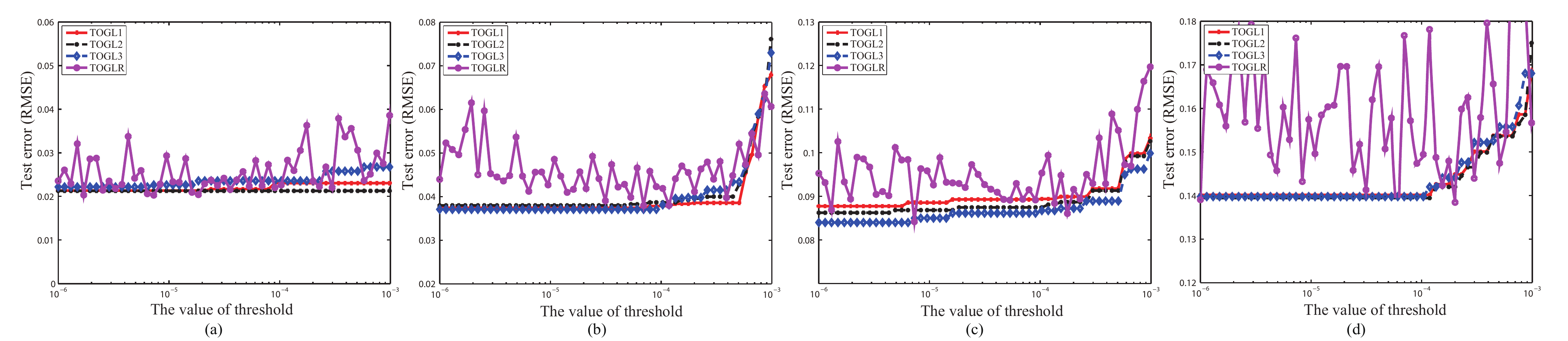}
    \caption{The generalization performance of TOGL with
    	four different greedy criteria. (a) The noise level $\sigma_1=0.1$. (b) $\sigma_2=0.5$. (c) $\sigma_3=1$. (d) $\sigma_4=2$.}\label{TOGA}
\end{figure*}

\begin{figure*}[htb]
	\centering
	\includegraphics[height=5cm,width=19cm]{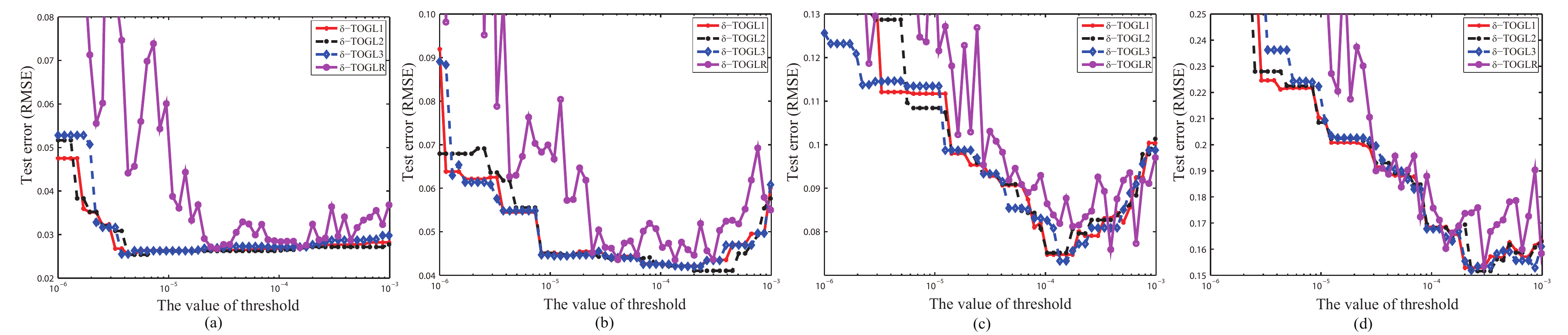}
	\caption{The generalization performance of $\delta$-TOGL with
		four different greedy criteria. (a) The noise level $\sigma_1=0.1$. (b) $\sigma_2=0.5$. (c) $\sigma_3=1$. (d) $\sigma_4=2$.}\label{DTOGA}
\end{figure*}

\begin{figure*}[htb]
	\centering
	\includegraphics[height=5cm,width=19cm]{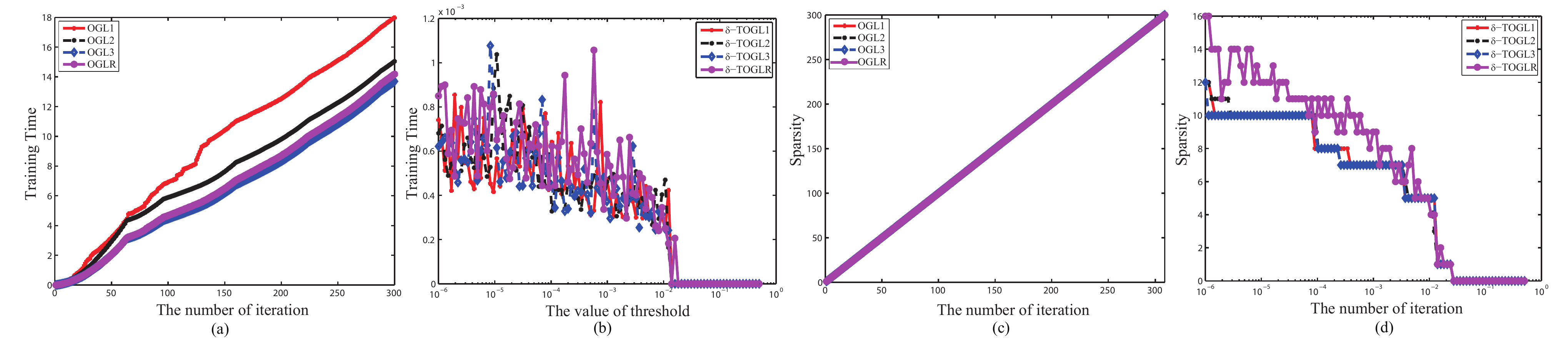}
	\caption{The influence with respect to corresponding parameter changes on the training cost and sparsity  in OGL and  $\delta$-TOGL, respectively. (a) The training time of OGL. (b) The training time of $\delta$-TOGL. (c) The sparsity of the estimator in OGL.  (d) The sparsity of the estimator in $\delta$-TOGL.
	}\label{DTOGL} 
\end{figure*}

Different from the previous simulation,  we   find in this
experiment that the optimal RMSE of TOGLR is similar as that of
TOGL1, TOGL2 and TOGL3. The main reason is that TOGL appends atom
satisfying  the ``$\delta$-greedy threshold'' criterion Eq. (\ref{our
metric}). It implies that once an appropriately value of   $\delta$
is preset, then the selection of the atom is not relevant. Therefore, it
agrees with Theorem  \ref{THEOREM1} and demonstrates that
the introduced ``$\delta$-greedy threshold'' is feasible. We also
present  quantitive comparisons in the  Table \ref{TOGATa}.
\begin{table}[htb]
      \renewcommand{\arraystretch}{1.3}
    \begin{center}
        \caption{Quantitive comparisons  for  different greedy criteria in TOGL.}\label{TOGATa}
        \scalebox{1.1}[1.1]{
        \begin{tabular}{|c|c|c|c|}\hline
            Methods & ${\delta}$ and $k$& TestRMSE &${k_{TOGL}^*}$            \\ \hline
            \multicolumn{4}{|c|}{$\sigma=0.1$} \\ \hline
            TOGL1   &[\text{1.00e-6,3.58e-5}]([9,13])      &0.0213  &8   \\ \hline
            TOGL2   &[\text{1.00e-6,1.70e-6}]([11,12])     &0.0213  &8    \\ \hline
            TOGL3   &[\text{1.00e-6,1.70e-6}]([12,13])     &0.0222  &10   \\ \hline
            TOGLR   &\text{9.52e-6}(12)                    &0.0203  &11   \\ \hline

            \multicolumn{4}{|c|}{$\sigma=0.5$} \\ \hline
            TOGL1   &[\text{1.00e-6,6.95e-5}]([8,13])    &0.0384  &8   \\ \hline
            TOGL2   &[\text{1.00e-6,4.67e-5}]([9,13])    &0.0390  &8   \\ \hline
            TOGL3   &[\text{1.00e-6,9.06e-5}]([8,13])    &0.0371  &8   \\ \hline
            TOGLR   &\text{6.95e-5}(9)                   &0.0379  &8   \\ \hline

            \multicolumn{4}{|c|}{$\sigma=1$} \\ \hline
            TOGL1   &[\text{1.00e-6,5.60e-6}]([11,13]) &0.0877  &8  \\ \hline
            TOGL2   &[\text{1.00e-6,4.30e-6}]([11,13]) &0.0862  &8  \\ \hline
            TOGL3   &[\text{1.00e-6,6.40e-6}]([11,13]) &0.0840  &8  \\ \hline
            TOGLR   &\text{7.30e-6}(12)                &0.0842  &8  \\ \hline

            \multicolumn{4}{|c|}{$\sigma=2$} \\ \hline
            TOGL1   &[\text{1.00e-6,1.18e-4}]([8,13])  &0.1402  &6   \\ \hline
            TOGL2   &[\text{1.00e-6,1.18e-4}]([8,13])   &0.1404  &6  \\ \hline
            TOGL3   &[\text{1.00e-6,1.03e-4}]([8,13]) &0.1408  &6      \\ \hline
            TOGLR   &\text{6.09e-5}(10)                  &0.1392  &5  \\ \hline
        \end{tabular}}
    \end{center}
\end{table}


In Table  \ref{TOGATa}, the  second column (``${\delta}$ and $k$'')
compares the optimal $\delta$  and   corresponding $k$
  (in the bracket) derived  only from
Eq. (\ref{Stop 1}) in TOGL. We also use ${k_{TOGL}^*}$  to denote the
optimal $k$ (with the best performance). The aim of recording these
quantities is to verify that only using Eq. (\ref{Stop 1}) to build up
the terminate criterion is not sufficient. In fact, TABLE
\ref{TOGATa} shows that for some data distributions, Eq. (\ref{Stop 1}) fails to
find out the optimal number of iteration $k$. Compared Table  \ref{TOGATa} with
Table  \ref{OGATab}, we find the TestRMSE derived from TOGL is
comparable with OGL, which states the feasibility of TOGL.

\subsection{Feasibility of  $\delta$-TOGL}

The only difference between $\delta$-TOGL and TOGL lies in the
termination  rule. Firstly we conduct the simulations to verify the
feasibility of the termination  rule Eq. (\ref{Our metric3}) in the  Table  \ref{DTOGAT}.
Here, the second column (${\delta}$ and $k$) records the optimal $\delta$  and 
corresponding $k$   derived from the terminate  rule Eq. (\ref{Our metric3}) in $\delta$-TOGL.
${k_{\delta-TOGL}^*}$  denotes the optimal $k$ selected according to the test samples. We see that the value of $k$
obtained by Eq. (\ref{Our metric3}) is almost the same as
${k_{\delta-TOGL}^*}$ for all four types of noise data. Furthermore, comparing  Table  \ref{DTOGAT} with TABLE
\ref{TOGATa}, we  find their TestRMSE are comparable. All these verify the feasibility and necessity of the termination  rule Eq. (\ref{Our metric3}) in $\delta$-TOGL.

\begin{table}[htb]
      \renewcommand{\arraystretch}{1.3}
    \begin{center}
        \caption{Feasibility of the termination  rule.}\label{DTOGAT}
        \begin{tabular}{|c|c|c|c|}\hline
            Methods & ${\delta}$ and $k$ & TestRMSE & ${k_{\delta-TOGL}^*}$       \\ \hline
            \multicolumn{4}{|c|}{$\sigma=0.1$} \\ \hline
            $\delta$-TOGL1    &[\text{4.30e-6,4.91e-6}](11)    &0.0255 &11   \\ \hline
            $\delta$-TOGL2    &[\text{5.60e-6,6.40e-6}]([10,11])    &0.0254  &10  \\ \hline
            $\delta$-TOGL3    &\text{3.76e-6}(11)              &0.0255  &11   \\ \hline
            $\delta$-TOGLR    &\text{2.75e-5}(11)              &0.0268 &11  \\ \hline

            \multicolumn{4}{|c|}{$\sigma=0.5$} \\ \hline
            $\delta$-TOGL1    &[\text{1.18e-4,1.35e-4}]([7,8])      &0.0407  &7    \\ \hline
            $\delta$-TOGL2    &[\text{2.01e-4,4.45e-4}](7)      &0.0401 &7   \\ \hline
            $\delta$-TOGL3    &[\text{1.54e-4.2.29e-4}]([7,8])      &0.0407  &7   \\ \hline
            $\delta$-TOGLR    &\text{1.35e-4}([8,9])                &0.0406  &9   \\ \hline

            \multicolumn{4}{|c|}{$\sigma=1$} \\ \hline
            $\delta$-TOGL1    &[\text{1.03e-4,1.76e-4}]([7,8])    &0.0747  &7     \\ \hline
            $\delta$-TOGL2    &[\text{1.03e-4,1.54e-4}]([7,8])    &0.0752  &7   \\ \hline
            $\delta$-TOGL3    &[\text{1.35e-4,1.54e-4}]([7,8])    &0.0733  &7   \\ \hline
            $\delta$-TOGLR    &\text{3.89e-4}([7,8])              &0.0759  &7    \\ \hline

            \multicolumn{4}{|c|}{$\sigma=2$} \\ \hline
            $\delta$-TOGL1    &[\text{2.01e-4,2.99e-4}]([6,7])    &0.1529  &6   \\ \hline
            $\delta$-TOGL2    &[\text{2.29e-4,3.41e-4}]([6,7])    &0.1516  &6   \\ \hline
            $\delta$-TOGL3    &\text{2.29e-4}([6,7])              &0.1519  &5   \\ \hline
            $\delta$-TOGLR    &\text{2.99e-4}([7,8])              &0.1537  &6   \\
            \hline
        \end{tabular}
    \end{center}
\end{table}
From OGL to  $\delta$-TOGL, the main parameter changes from $k$ to $\delta$. The following simulations aim at highlighting the role of the  main parameters to illustrate the feasibility of $\delta$-TOGL.
Similar to  Fig. \ref{OGA}, we consider the relation between TestRMSE
and the main parameter of $\delta$-TOGL in the  Fig.
\ref{DTOGA}.
It can be found from Fig. \ref{DTOGA} that although there may be
additional oscillation within a small scope, the generalization
capability of $\delta$-TOGL is not very sensitive to $\delta$ on the whole, which is different from OGL (see Fig. \ref{OGA}).

We also examine the relation between training and test
cost and  the main parameter in OGL and $\delta$-TOGL to illustrate
the feasibility of $\delta$-TOGL. As the test time mainly depends  on
the sparsity of the estimator, we record the sparsity instead. In
this simulation,
the scope
of iterations in OGL starts from $0$ to the size of dictionary
(i.e., $n$=300) and our   theoretical assertions reveal that the
range of  $\delta$ in $\delta$-TOGL is $(0, 0.5]$. We create 50
candidate values of $\delta$ within $[10^{-6},1/2]$. It can be
observed from the results in Fig. \ref{DTOGL} that the training time
(in seconds)  and sparsity of $\delta$-TOGL is far less than OGL,
which implies the computational amount of $\delta$-TOGL is much
smaller than OGL.


%

\subsection{Comparisons}
In this part,  we compare $\delta$-TOGL with   other classical
dictionary-based learning schemes such as the pure greedy learning
PGL \cite{Friedman2001}, OGL \cite{Barron2008},  ridge regression \cite{Golub1979} and Lasso \cite{Tibshirani1995}.
We employ the $\mathcal{L}_2$ regularized least-square (RLS) solution in ridge regression and
the fast iterative shrinkage-thresholding algorithm (FISTA) in Lasso \cite{Beck2009}.
All the parameters, i.e., the number of iterations $k$ in PGL or OGL, the
regularization parameter $\lambda$ in RLS or FISTA and the greedy
threshold $\delta$  in $\delta$-TOGL are all selected according to
test dataset (or test RMSE) directly, since we mainly focus on the
impact of the theoretically optimal parameter rather than validation
techniques. The results are listed in Table  \ref{table6}, where the
standard errors of test RMSE are also reported (numbers in
parentheses).
\begin{table}[htb]
    \renewcommand{\arraystretch}{1.3}
    \begin{center}
        \caption{Comparing the performance of $\delta$-TOGL  with other classic algorithms.}\label{table6}
        \scalebox{0.90}[1]{
        \begin{tabular}{|c|c|c|c|c|}\hline
            Methods & Parameter & TestRMSE  & Sparsity & Running time  \\ \hline
            \multicolumn{5}{|c|}{Regression function $sinc$, dictionary ${\mathcal D}_n, n=300$, noise level  $\sigma=0.1$} \\ \hline
            PGL  &$k=78$&0.0284(0.0037) & 78.0 & 27.4   \\ \hline
            OGL  &$k=9$&0.0218(0.0034) & 9.0   & 11.3\\ \hline
            $\delta$-TOGL1   &\text{$\delta=1.00e-4$}&0.0200(0.0044) &7.4 &4.0  \\ \hline
            $\delta$-TOGL2   &\text{$\delta=2.00e-4$}&0.0203(0.0064) &8.0  &3.9 \\ \hline
            $\delta$-TOGL3   &\text{$\delta=1.30e-6$}&0.0284(0.0074) &12.2 &4.3 \\ \hline
            $\delta$-TOGLR   &\text{$\delta=5.11e-4$}&0.0219(0.0059) &9.1 &3.5  \\ \hline
            $\mathcal L_2$(RLS)     &$\lambda=\text{5e-5}$&0.0313(0.0088) &300.0 &0.5   \\ \hline
            $\mathcal L_1$(FISTA)   &$\lambda = \text{5e-6}$&0.0318(0.0102) &281.2  & 41.7 \\ \hline
            \multicolumn{5}{|c|}{Regression function $sinc$, dictionary ${\mathcal D}_n, n=1000$, noise level  $\sigma=0.1$} \\ \hline
            PGL  &$k=181$&0.0278(0.0044) &181.0  & 116.6  \\ \hline
            OGL  &$k=9$&0.0255(0.0045) &9.0  & 62  \\ \hline
            $\delta$-TOGL1    &\text{$\delta=1.00e-4$}&0.0277(0.0072) &7.2 &5.8 \\ \hline
            $\delta$-TOGL2   &\text{$\delta=6.00e-4$}&0.0294(0.0119) &7.0  &5.8  \\ \hline
            $\delta$-TOGL3   &\text{$\delta=6.00e-6$}&0.0211(0.0036) &7.8  &6.0  \\ \hline
            $\delta$-TOGLR   &\text{$\delta=3.68e-4$}&0.0284(0.0082) &10.4  &4.7  \\ \hline
            $\mathcal L_2$(RLS)  &$\lambda=0.0037$& 0.0322(0.0103)  &1000.0   &6.1 \\ \hline
            $\mathcal L_1$(FISTA)    &$\lambda=\text{8e-6}$&0.0317(0.0079)
            &821.2 & 103.7  \\ \hline
            \multicolumn{5}{|c|}{Regression function $sinc$, dictionary ${\mathcal D}_n, n=2000$, noise level  $\sigma=0.1$} \\ \hline
            PGL  &$k=263$ & 0.0267(0.0036) & 263.0  & 236.4   \\ \hline
            OGL   &$k=9$&0.0250(0.0054) &9.0  &374.7 \\ \hline
            $\delta$-TOGL1    &\text{$\delta=2.00e-4$}&0.0256(0.0078)  &7.1 &9.5  \\ \hline
            $\delta$-TOGL2   &\text{$\delta=1.00e-4$}&0.0280(0.0089) &8.6   &9.3  \\ \hline
            $\delta$-TOGL3   &\text{$\delta=2.00e-6$}&0.0222(0.0082) &7.6   &9.2  \\ \hline
            $\delta$-TOGLR   &\text{$\delta=4.176e-5$}&0.0266(0.0079) &10.6  &6.7  \\ \hline
            $\mathcal L_2$(RLS)   &$\lambda= 0.0005$&0.0305(0.0088) &2000.0  &28.9  \\ \hline
            $\mathcal L_1$(FISTA)    &$\lambda=\text{7e-6}$&0.0335(0.0079) &1252.4 & 176.3
            \\ \hline
        \end{tabular}}
    \end{center}
\end{table}
From the results of Table  \ref{table6},  we observe that
the sparsities (or the number of selected atoms) of greedy-type strategies are far smaller than regularization-based methods, while they enjoy better performance.
It  empirically  verifies that  greedy-type algorithms are more suitable for redundant
dictionary learning, which  is also consistent with \cite{Barron2008}.

Furthermore, it  can be found in Table  \ref{table6} that, although the generalization performance of all the aforementioned learning schemes are similar, $\delta$-TOGL finishes the corresponding learning task within
a remarkably short period of running time. Although PGL has a lower computation complexity than OGL, its convergence rate is quite slow. Generally, PGL needs tens of thousands of iterations  to guarantee performance, just as we preset the maximun of the default number of iteration of PGL is $10000$ in the numerical studies.
Therefore the applicable range of PGL is  restricted.
OGL possesses almost optimal convergence rate and generally converges within a few number of iterations. However, its computation complexity is huge, especially in large-scale dictionary learning.  Table  \ref{table6} shows that, when the size of dictionary $n$ are $300$ and $1000$, OGL performs  faster than PGL, however it is much slower than PGL when $n$ is $2000$.

$\delta$-TOGL can significantly reduce the computation cost of OGL without
sacrificing its generalization performance and sparsity,
just as the results of $\delta$-TOGL1, $\delta$-TOGL2, $\delta$-TOGL3 and $\delta$-TOGLR shown in Table  \ref{table6}. It is mainly due to an appropriate ``$\delta$-greedy threshold''  effective filtering a  mass of ``dead atoms'' from the dictionary.
We also notice that,
$\delta$-TOGLR not only owns the good performance but also has the lowest computation complexity among the four $\delta$-TOGL learning schemes.
It implies that, 
selecting the ``active atom'' from the dictionary without traversal
can further reduce the  complexity without deteriorating 
the performance of OGL.

\section{Real data experiments}

We have verified that $\delta$-TOGL is a feasible learning scheme in 
previous simulations. Especially, $\delta$-TOGLR possesses both  good generalization performance and the lowest computation complexity.
We now  verify the learning performance of $\delta$-TOGLR on five real data sets and compare it with other classical dictionary-based learning methods including PGL, OGL, RLS and FISTA.

The first dataset is  the Prostate cancer dataset
\cite{Blake1998}. The data set consists of the medical records of 97
patients who have received a radical prostatectomy. The
predictors are 8 clinical measures and 1 response variable.
The second  dataset is the Diabetes
data set \cite{Efron2004}. This data set contains 442 diabetes
patients that are measured on 10 independent variables and 1 response variable.
The third  one is the Boston Housing data set created form a housing values survey in suburbs of Boston by Harrison \cite{Harrison1978}. The Boston Housing dataset contains 506
instances which include 13 attributions and 1 response
variable. The fourth one
is the Concrete Compressive Strength (CCS) dataset
\cite{Ye1998}, which contains 1030 instances including 8
quantitative independent variables
and 1 dependent variable. The fifth one is the Abalone dataset\cite{Nash1994} collected for predicting the age of abalone
from physical measurements. The data set contains 4177 instances
which were measured on 8 independent variables  and 1 response variable.

Similarly, we randomly divide all the real data sets into
two disjoint equal parts. The first half serves as the training set
and the second half serves as the test set. We also use the Z-score standardization method \cite{Kreyszig1979} to normalize the data sets, in order to avoid the error caused by considerable  magnitude difference among data dimensions.
For each real data experiment, Gaussian radial basis function is also used to build up the dictionary:
$$
\left\{e^{-\|x-t_i\|^2/\eta^2}: i=1, \ldots,n\right\},
$$
where $\{t_i\}_{i=1}^n$ are drawn as the training samples themselves, thus the size of dictionary equals to training samples.
We set the  standard deviation of radial basis function as $\eta= \frac{d_{max}}{\sqrt{2n}}$, where $d_{max}$ is maximum distance
among all centers $\{t_i\}_{i=1}^n$, in order to
avoid the radial basis function is too sharp or flat.

Table  \ref{t7} documents the experimental results of generalization performance and running time on aforementioned five real data sets. We can clearly observe that, for the small-scale dictionary, i.e., for the Prostate data set, although $\delta$-TOGLR can achieve good performance, its running cost is greater than OGL and RLS. In fact, for each candidate threshold parameter $\delta$, a different iteration of the algorithm is needed run from scratch, which  cancels the computational advantage of $\delta$-TOGLR in small size dictionary learning.
However, we also notice that, for the middle-scale dictionary, i.e., Diabetes,  Housing and CCS, $\delta$-TOGLR begin to gradually surpass the other learning methods in computation with maintaining similar generalization performance as OGL. Especially for the large-scale dictionary learning, i.e., Abalone, $\delta$-TOGLR dominates other methods with a large margin in computation complexity and still possesses good performance.

\begin{table*}[htb]
    \renewcommand{\arraystretch}{1.6}
    \begin{center}
        \caption{The comparative results of performance and running time
            on  five real data sets}\label{t7}
        \scalebox{1}[1]{
            \begin{tabular}{|c|c|c|c|c|c|}\hline
                \backslashbox[2cm] {Methods}{Datasets}  & Prostate  & Diabetes & Housing & CCS & Abalone \\ \hline
              Dictionary size    & $n=50$ & $n=220$    & $n=255$   & $n=520$  & $n=2100$  \\ \hline
                \multicolumn{6}{|c|}{Average  performance } \\ \hline
                $\delta$-TOGLR  & 0.4208 (0.0112) &  55.1226 (1.0347)   & 4.045 (0.4256)   & 7.1279 (0.3294)   & 2.2460 (0.0915)  \\ \hline
                PGL    & 0.4280 (0.0081)  &   56.3125 (2.0542)  & 4.0716 (0.2309)   & 11.2803 (0.0341)  & 2.5880 (0.0106)  \\ \hline
                OGL    & 0.5170 (0.0119)  &  54.6518 (2.8700)   & 3.9447 (0.1139)   & 6.0128 (0.1203)   & 2.1725 (0.0088)  \\ \hline
                RLS    & 0.4415 (0.0951) &   57.3886 (1.5854)   & 3.9554 (0.3236)   & 9.8512 (0.2693)   & 2.2559 (0.0514)  \\ \hline
                FISTA  & 0.6435 (0.0151)  &  61.7636 (2.5811)     & 5.1845 (0.1859)   & 12.8127 (0.3019)  & 3.4161 (0.0774)  \\ \hline

                \multicolumn{6}{|c|}{Average  running time} \\ \hline
                $\delta$-TOGLR  &  0.58   & 1.11   & 0.89   & 0.82  & 4.22  \\ \hline
                PGL    &  41.93  & 49.06  & 52.04  & 79.93 & 193.97  \\ \hline
                OGL    &  0.16    & 1.11  & 1.42   & 7.46  & 787.2  \\ \hline
                RLS    & 0.15 &  0.27     & 0.33   & 1.20  & 42.59  \\ \hline
                FISTA  & 0.52    & 1.11   & 1.40   & 9.04  & 257.8  \\ \hline
            \end{tabular}}
        \end{center}
    \end{table*}

\section{Conclusion and further discussions}

In this paper, we study the greedy criteria in  orthogonal greedy learning (OGL).
The main  contributions can be concluded in  four aspects.

Firstly, we propose that the steepest gradient descent
(SGD) is not the unique greedy criterion to select atoms from dictionary
in OGL, which
paves a new way for exploring greedy criterion in greedy learning.
To the best of our knowledge, this may be the first work concerning  the
``greedy criterion'' issue in the field of supervised learning.
Secondly, motivated by a series of previous researches of Temlyakov and his co-authors in greedy approximation \cite{Temlaykov2000,Temlaykov2003,Temlaykov2008,Temlaykov2008a,Liu2012},
we eventually use the ``$\delta$-greedy threshold'' criterion to quantify the level of greed for the learning purpose.  Our theoretical result shows that OGL with such a
greedy criterion yields a learning rate as $ m^{-1/2} (\log m)^2$,
which is almost the same as that of the classical SGD-based OGL in
\cite{Barron2008}.
Thirdly, based on the ``$\delta$-greedy threshold'' criterion, we
derive an adaptive terminal rule for the corresponding OGL and thus
provide a complete new learning scheme called as $\delta$-thresholding
orthogonal greedy learning ($\delta$-TOGL).
We also present the theoretical demonstration that $\delta$-TOGL can reach the existing (almost) optimal learning rate just as the iteration-based termination  rule dose in  \cite{Barron2008}.
Finally,  we analyze the generalization performance of $\delta$-TOGL
and compare it with other popular dictionary-based learning methods including  pure greedy learning PGL,  OGL,  ridge regression and Lasso  through  plenty of numerical experiments.
The empirical results verify that the $\delta$-TOGL is a promising learning scheme,
which possesses the good generalization performance  and
learns much faster than conventional methods in large-scale dictionary.




%

\appendices
\section{Proofs}
Since Theorem \ref{THEOREM1} can be derived from  Theorem  \ref{THEOREM2} directly, we only prove Theorem \ref{THEOREM2} in
this section. The methodology of proof is somewhat standard in
learning theory. In fact, we   use the error decomposition strategy
in \cite{Lin2013a} to divide the generalization error into
approximation error, sample error and hypothesis error. The main
difficult of the proof is to bound the hypothesis error. The main
tool  to bound it is borrowed from \cite{Temlaykov2008a}.

In order to give an error decomposition strategy for $\mathcal
E(f_{\bf z}^k)-\mathcal E(f_\rho)$, we  need to construct a function
$f_k^*\in \mbox{span}(D_n)$ as follows. Since $f_\rho\in \mathcal
L_{1,\mathcal D_n}^r$, there exists a $h_\rho:=\sum_{i=1}^na_ig_i\in
\mbox{Span}(\mathcal D_n)$ such that
\begin{equation}\label{h}
\|h_\rho\|_{\mathcal L_{1,\mathcal D_n}}\leq\mathcal B,\ \mbox{and}\
\|f_\rho-h_\rho\|\leq \mathcal B n^{-r}.
\end{equation}
Define
\begin{equation}\label{f*}
f_0^*=0,\  f_k^*=\left(1-\frac1k\right)f^*_{k-1}+\frac{\sum_{i=1}^n|a_i|\|g_i\|_\rho}{k}g^*_k,
\end{equation}
where
$$
g_k^*:=\arg\max\limits_{g\in \mathcal D_n'}\left\langle
h_\rho-\left(1-\frac1k\right)f_{k-1}^*,g\right\rangle_{\rho},
$$
and
$$
\mathcal D_n':=\left\{{g_i(x)}/{\|g_i\|_\rho}\right\}_{i=1}^n
\bigcup
\left\{-{g_i(x)}/{\|g_i\|_\rho}\right\}_{i=1}^n
$$
with $g_i\in \mathcal D_n$.

Let $f_{\bf z}^\delta$  and $f_k^*$ be defined as in Algorithm 1 and
Eq. (\ref{f*}), respectively,  then we have
\begin{eqnarray*}
	&&\mathcal E(\pi_Mf_{\bf z}^\delta)-\mathcal E(f_\rho)\\
	&\leq&
	\mathcal E(f_k^*)-\mathcal E(f_\rho)
	+
	\mathcal E_{\bf z}(\pi_Mf_{\bf z}^\delta)-\mathcal E_{\bf z}(f_k^*)\\
	&+&
	\mathcal
	E_{\bf z}(f_k^*)-\mathcal E(f_k^*)+\mathcal E(\pi_Mf_{\bf z}^\delta)-\mathcal
	E_{\bf z}(\pi_Mf_{\bf z}^\delta),
\end{eqnarray*}
where $\mathcal E_{\bf
	z}(f)=\frac1m\sum_{i=1}^m(y_i-f(x_i))^2$.

Upon making the short hand notations
$$
\mathcal D(k):=\mathcal E(f_k^*)-\mathcal E(f_\rho),
$$
$$
\mathcal S({\bf z},k,\delta):=\mathcal
E_{\bf z}(f_k^*)-\mathcal E(f_k^*)+\mathcal E(\pi_Mf_{\bf z}^\delta)-\mathcal
E_{\bf z}(\pi_Mf_{\bf z}^\delta),
$$
and
$$
\mathcal P({\bf z},k,\delta):=\mathcal E_{\bf z}(\pi_Mf_{\bf z}^\delta)-\mathcal E_{\bf
	z}(f_k^*)
$$
respectively for the approximation error, the sample error and the
hypothesis error, we have
\begin{equation}\label{error decomposition}
\mathcal E(\pi_Mf_{\bf z}^\delta)-\mathcal E(f_\rho)=\mathcal
D(k)+ \mathcal S({\bf z},k,\delta)+\mathcal P({\bf z},k,\delta).
\end{equation}

At first, we give an upper bound estimate for $\mathcal D(k)$, which
can be found in Proposition 1 of \cite{Lin2013a}.

\begin{lemma}\label{LEMMA1}
	Let $f_k^*$ be defined in Eq. (\ref{f*}). If
	$f_\rho\in \mathcal L_{1,\mathcal D_n}^r$, then
	\begin{equation}\label{approximation error estimation}
	\mathcal D(k)\leq  \mathcal B^2(k^{-1/2}+n^{-r})^2.
	\end{equation}
\end{lemma}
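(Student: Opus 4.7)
The plan is to treat $\mathcal{D}(k) = \|f_k^* - f_\rho\|_\rho^2$ by splitting the error through the intermediate approximant $h_\rho$, and then invoking the classical relaxed greedy convergence estimate. First I would apply the triangle inequality in the $\|\cdot\|_\rho$ norm to write
$$
\|f_k^* - f_\rho\|_\rho \le \|f_k^* - h_\rho\|_\rho + \|h_\rho - f_\rho\|_\rho .
$$
The second term is immediate from the assumption $f_\rho \in \mathcal{L}_{1,\mathcal{D}_n}^r$: since $\rho_X$ is a probability measure, $\|h_\rho - f_\rho\|_\rho \le \|h_\rho - f_\rho\|_\infty \le \mathcal{B}\,n^{-r}$ by Eq. (\ref{h}). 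The entire difficulty therefore concentrates on the first term.

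To bound $\|f_k^* - h_\rho\|_\rho$, I would recognize that the recursion in Eq. (\ref{f*}) is precisely the \emph{relaxed greedy algorithm} of Jones/Barron/DeVore--Temlyakov, applied to the target $h_\rho$ in the Hilbert space $L^2_{\rho_X}$ with the symmetrized normalized dictionary $\mathcal{D}_n'$. The step sizes $1-1/k$ and $c/k$, with $c = \sum_i |a_i|\,\|g_i\|_\rho$, are exactly the convex-combination weights that keep each iterate in the scaled convex hull $c\cdot\mathrm{conv}(\mathcal{D}_n')$ containing $h_\rho$. Writing $h_\rho = \sum_i \mathrm{sgn}(a_i)\,|a_i|\,\|g_i\|_\rho \cdot \bigl(g_i/\|g_i\|_\rho\bigr)$ expresses $h_\rho$ as an element of this hull with total mass
$$
\sum_{i=1}^n |a_i|\,\|g_i\|_\rho \;\le\; \sum_{i=1}^n |a_i| \;\le\; \mathcal{B},
$$
where the first inequality uses $\sup_{x\in X,\,g\in\mathcal{D}_n}|g(x)|\le 1$, and the second uses Eq. (\ref{h}).

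The main step is then the standard Hilbert-space estimate for the relaxed greedy algorithm: if the target lies in the convex hull of $\mathcal{D}_n'$ scaled by $M$, the $k$-th iterate obeys $\|f_k^* - h_\rho\|_\rho \le M\,k^{-1/2}$. I would verify this by the usual induction on $k$: using the recursion to expand $\|f_k^* - h_\rho\|_\rho^2$ into a sum of $(1-1/k)^2\|f_{k-1}^* - h_\rho\|_\rho^2$, a cross term controlled by the maximality property of $g_k^*$ over $\mathcal{D}_n'$ (which ensures $\langle h_\rho - (1-1/k)f_{k-1}^*,\, g_k^*\rangle_\rho \ge \frac{1}{c}\|h_\rho - (1-1/k)f_{k-1}^*\|_\rho^2$ when $h_\rho$ is in the $c$-scaled convex hull), and a bounded quadratic remainder term of order $c^2/k^2$. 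With $M \le \mathcal{B}$, this yields $\|f_k^* - h_\rho\|_\rho \le \mathcal{B}\,k^{-1/2}$.

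Combining the two bounds gives $\|f_k^* - f_\rho\|_\rho \le \mathcal{B}(k^{-1/2} + n^{-r})$, and squaring produces the claimed estimate. The principal obstacle is the induction establishing the $k^{-1/2}$ relaxed greedy rate; all other steps are bookkeeping. Since this result is already recorded as Proposition 1 of \cite{Lin2013a}, one may simply cite it, but the argument above indicates how it is obtained from first principles.
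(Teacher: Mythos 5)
Your proposal is correct and matches the paper's approach: the paper establishes this lemma only by citing Proposition 1 of \cite{Lin2013a}, and your decomposition through $h_\rho$ (uniform-norm bound $\mathcal B n^{-r}$ for $\|h_\rho-f_\rho\|_\rho$) combined with the standard relaxed-greedy Hilbert-space rate $\|f_k^*-h_\rho\|_\rho\le c\,k^{-1/2}\le\mathcal B k^{-1/2}$, $c=\sum_i|a_i|\|g_i\|_\rho$, is exactly the argument behind that proposition. The only slip is the parenthetical cross-term inequality, which should read $\langle h_\rho-(1-\tfrac1k)f_{k-1}^*,\,g_k^*\rangle_\rho\ge\tfrac1c\langle h_\rho-(1-\tfrac1k)f_{k-1}^*,\,h_\rho\rangle_\rho$ (from averaging over the convex representation of $h_\rho/c$) rather than $\ge\tfrac1c\|h_\rho-(1-\tfrac1k)f_{k-1}^*\|_\rho^2$; with that form the three-term expansion collapses to $e_k^2\le(1-\tfrac1k)^2e_{k-1}^2+c^2/k^2$ and the induction closes as you describe.
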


To bound the sample and hypothesis errors, we need the following
Lemma \ref{LEMMA2}.

\begin{lemma}\label{LEMMA2}
	Let $y(x)$ satisfy $y(x_i)=y_i$,  and $f_{\bf z}^\delta$ be  defined
	in Algorithm 1. Then, there are at most
	\begin{equation}\label{Estimate k}
	C\delta^{-2}\log\frac1\delta
	\end{equation}
	atoms selected to build up the estimator $f_{\bf z}^\delta$.
	Furthermore, for any $h \in \mbox{Span}\{D_n\}$, we have
	\begin{equation}\label{estimate hypothesis error}
	\|y - f_{\bf z}^\delta\|_m^2\leq2\|y - h\|_m^2+
	2\delta^2\|h\|_{\mathcal L_1(\mathcal D_n)}.
	\end{equation}
\end{lemma}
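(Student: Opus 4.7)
The plan is to adapt Temlyakov's analysis of the Chebyshev greedy algorithm with thresholds \cite{Temlaykov2008a} to the empirical inner product $\langle\cdot,\cdot\rangle_m$. The argument naturally decomposes into an iteration count and a residual estimate; both are underpinned by a single one-step geometric decay of $\|r_k\|_m$.

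I would first establish this decay. Since $f_{\bf z}^k=P_{V_{{\bf z},k}}y$, the residual $r_{k-1}$ is perpendicular to $V_{{\bf z},k-1}$ in the metric $\langle\cdot,\cdot\rangle_m$, and setting $g_k'=g_k-P_{V_{{\bf z},k-1}}g_k$ yields the Pythagorean identity
$$
\|r_k\|_m^2=\|r_{k-1}\|_m^2-\frac{\langle r_{k-1},g_k\rangle_m^2}{\|g_k'\|_m^2}.
$$
Because $\|g_k'\|_m\le\|g_k\|_m\le 1$ (from $\sup_{g\in\mathcal D,x}|g(x)|\le 1$) and the greedy atom satisfies $|\langle r_{k-1},g_k\rangle_m|>\delta\|r_{k-1}\|_m$, we obtain $\|r_k\|_m^2\le(1-\delta^2)\|r_{k-1}\|_m^2$ and hence $\|r_k\|_m^2\le(1-\delta^2)^k\|y\|_m^2$. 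Since Step 4 forces termination as soon as $\|r_k\|_m\le\delta\|y\|_m$, the algorithm must stop by the least $k$ with $(1-\delta^2)^k\le\delta^2$, which gives the iteration count $k=O(\delta^{-2}\log\delta^{-1})$ after a routine logarithmic manipulation, establishing Eq. (\ref{Estimate k}).

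For the residual bound I would split on the termination trigger. If termination occurs because $\max_{g\in\mathcal D_n}|\langle r_{k^*},g\rangle_m|\le\delta\|r_{k^*}\|_m$, then for every $h=\sum_i c_i g_i\in\mbox{Span}(\mathcal D_n)$ the definition of $\|\cdot\|_{\mathcal L_1(\mathcal D_n)}$ yields $|\langle r_{k^*},h\rangle_m|\le\delta\|r_{k^*}\|_m\|h\|_{\mathcal L_1(\mathcal D_n)}$. Using $\langle r_{k^*},f_{\bf z}^\delta\rangle_m=0$, we expand $\|r_{k^*}\|_m^2=\langle r_{k^*},y-h\rangle_m+\langle r_{k^*},h\rangle_m$ and apply Cauchy--Schwarz to the first term to deduce $\|r_{k^*}\|_m\le\|y-h\|_m+\delta\|h\|_{\mathcal L_1(\mathcal D_n)}$; squaring via $(a+b)^2\le 2a^2+2b^2$ then produces the stated bound (with the natural power $\|h\|_{\mathcal L_1(\mathcal D_n)}^2$, which is presumably intended). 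In the alternative case $\|r_{k^*}\|_m\le\delta\|y\|_m$, writing $\|y\|_m\le\|y-h\|_m+\|h\|_m\le\|y-h\|_m+\|h\|_{\mathcal L_1(\mathcal D_n)}$ (using $\|g\|_m\le 1$) and squaring recovers the same bound.

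The main obstacle is the first step: extracting the full projection gain $\langle r_{k-1},g_k\rangle_m^2/\|g_k'\|_m^2$ with the denominator controlled by one, rather than settling for the weaker pure-greedy gain that ignores the orthogonalization of $g_k$ against $V_{{\bf z},k-1}$. Beyond that the termination case split, Cauchy--Schwarz combined with the orthogonality of $r_{k^*}$ to the chosen subspace, and the logarithmic tail count are all essentially bookkeeping once the geometric decay is in hand.
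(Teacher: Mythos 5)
Your proof is correct, and the residual bound Eq.~(\ref{estimate hypothesis error}) follows the paper's own argument essentially verbatim: the same case split on which termination trigger fired, the orthogonality $\langle r_{k^*},f_{\bf z}^\delta\rangle_m=0$, Cauchy--Schwarz on $\langle r_{k^*},y-h\rangle_m$, and the bound $|\langle r_{k^*},h\rangle_m|\leq\delta\|r_{k^*}\|_m\|h\|_{\mathcal L_1(\mathcal D_n)}$ (and you rightly observe that the right-hand side should carry $\|h\|_{\mathcal L_1(\mathcal D_n)}^2$, consistent with what the paper's own chain of inequalities actually yields). The only divergence is on the iteration count Eq.~(\ref{Estimate k}), which the paper disposes of by citing \cite[Theorem 4.1]{Temlaykov2008a}, whereas you supply the underlying argument --- the one-step Pythagorean gain $\|r_k\|_m^2\leq(1-\delta^2)\|r_{k-1}\|_m^2$ from the $\delta$-greedy threshold and $\|g_k'\|_m\leq 1$, iterated to $\|r_k\|_m^2\leq(1-\delta^2)^k\|y\|_m^2$ and combined with the $\|r_k\|_m\leq\delta\|y\|_m$ stopping trigger --- which is sound and makes the lemma self-contained.
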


\begin{proof}
	(\ref{Estimate k}) can be found in \cite[Theorem
	4.1]{Temlaykov2008a}. Now we turn to prove (\ref{estimate hypothesis
		error}). Our termination  rule guarantees that either $\max_{g\in
		\mathcal D_n}|\langle r_{k},g\rangle_m|\leq\delta\|r_k\|_{m}$ or
	$\|r_k\|\leq\delta\|y\|_m.$ In the latter case the required bound
	follows form
	\begin{equation*}
	\begin{aligned}
	\|y\|_m & \leq\|y-h\|_m+\|h\|_m \\ & \leq\delta(\|y-h\|_m+\|h\|_m)
	\\ & \leq\delta(\|f-h\|_m+\|h\|_{\mathcal L_1(\mathcal D_n)}).
	\end{aligned}
	\end{equation*}

	Thus, we assume  $\max_{g\in \mathcal D_n}|\langle
	r_{k},g\rangle_m|\leq\delta\|r_k\|_{m}$ holds. By using
	$$
	\langle y-f_k,f_k\rangle_m=0,
	$$
	we have
	\begin{equation*}
	\begin{aligned}
	\|r_k\|_m^2 &=
	\langle r_k,r_k\rangle_m \\
	&= \langle r_k,y-h\rangle_m+\langle r_k,h\rangle_m
	\\ & \leq
	\|y-h\|_m\|r_k\|_m+\langle r_k,h\rangle_m\\
	& \leq
	\|y-h\|_m\|r_k\|_m+\|h\|_{\mathcal L_1(\mathcal D_n)}\max_{g\in \mathcal D_n}\langle
	r_k,g\rangle_m
	\\ & \leq
	\|y-h\|_m\|r_k\|_m+\|h\|_{\mathcal L_1(\mathcal D_n)}\delta\|r_k\|_m.
	\end{aligned}
	\end{equation*}
	This finishes the proof.
\end{proof}

Based on Lemma \ref{LEMMA2} and the fact $\|f^*_k\|_{\mathcal
	L_1(\mathcal D_n)}\leq \mathcal B$ \cite[Lemma 1]{Lin2013a}, we
obtain
\begin{equation}\label{hypothesis error estimation}
\mathcal P({\bf z},k,\delta)\leq 2\mathcal E_{\bf z}(\pi_Mf_{\bf z}^\delta)-\mathcal E_{\bf
	z}(f_k^*)\leq 2\mathcal B\delta^2.
\end{equation}

Now, we turn to    bound the sample error $\mathcal S({\bf z},k)$.
Upon using the short hand notations
$$
S_1({\bf z},k):=\{\mathcal E_{\bf
	z}(f_k^*)-\mathcal E_{\bf
	z}(f_\rho)\}-\{\mathcal E(f_k^*)-\mathcal
E(f_\rho)\}
$$
and
$$
S_2({\bf z},\delta):=\{\mathcal E(\pi_Mf_{\bf z}^\delta)-\mathcal E(f_\rho)\}-\{\mathcal E_{\bf
	z}(\pi_Mf_{\bf z}^\delta)-\mathcal E_{\bf z}(f_\rho)\},
$$
we write
\begin{equation}\label{sample decomposition}
\mathcal S({\bf z},k)=\mathcal S_1({\bf z},k)+\mathcal
S_2({\bf z},\delta).
\end{equation}
It can be found in Proposition 2 of \cite{Lin2013a} that
for any $0<t<1$, with confidence
$1-\frac{t}2$,
\begin{equation}\label{S1 estimate}
\mathcal S_1({\bf z},k)\leq \frac{7(3M+\mathcal B\log\frac2t)}{3m}+\frac12\mathcal D(k)
\end{equation}

Using \cite[Eqs(A.10)]{Xu2014} with $k$ replaced by
$C\delta^{-2}\log\frac1\delta$, we have
\begin{equation}\label{S2 estimate}
\mathcal S_2({\bf z},\delta)\leq \frac12\mathcal E(\pi_Mf_{\bf
	z}^\delta)-\mathcal
E(f_\rho)+\log\frac2t\frac{C\delta^{-2}\log\frac1\delta\log
	m}{m}
\end{equation}
holds with confidence at least $1-t/2$. Therefore, (\ref{error
	decomposition}), (\ref{approximation error estimation}),
(\ref{hypothesis error estimation}), (\ref{S1 estimate}), (\ref{S2
	estimate}) and (\ref{sample decomposition}) yields that
\begin{equation*}
\begin{aligned}
& \mathcal E(\pi_Mf_{\bf z}^\delta)-\mathcal E(f_\rho)
\\ & \leq
C\mathcal B^2( (m\delta^2)^{-1}\log m\log \frac{1}{\delta }\log\frac2t+\delta^2+n^{-2r})
\end{aligned}
\end{equation*}
holds with confidence at least $1-t$. This finishes the proof of
Theorem \ref{THEOREM2}.

\section*{Acknowledgment}
The research was supported by National Basic Research Program (973 Program) (2013CB329404) and Key Project of National Science Foundation of China  (Grant No. 11131006 and 91330204).

\begin{thebibliography}{1}




\bibitem{Armagan2009}
A. Armagan, Variational Bridge Regression, J. Mach. Learn. Res., 5
(2009), 17-24.

\bibitem{Barron2008}
A. R. Barron, A. Cohen, W. Dahmen,  R. A. DeVore,
Approximation and learning by greedy algorithms,  Ann. Statist.,
36, 64-94, 2008.

\bibitem{Beck2009}
A. Beck,   M. Teboulle, A fast iterative shrinkage-thresholding algorithm for linear inverse problems. SIAM journal on imaging sciences, 2(1), 183-202, 2009.

\bibitem{Beck2009}
A. Beck, M. Teboulle, A fast iterative shrinkagethresholding algorithn for linear inverse problems,  SIAM
J. Imag. SCI., 2, 183-202, 2009.


\bibitem{Blake1998}
C.~Blake and C.~Merz, UCI repository of machine learning databases,
1998.

\bibitem{Bennett1988}
C. Bennett, R. Sharpley, Interpolation of Operators, Academic Press,
Boston, 1988.


\bibitem{Buhlmann2003}
P. Buhlmann, B. Yu, Boosting with the $L_2$ loss: regression and
classification, J. Amer. Statist. Assoc., 98, 324-339, 2003.

%

\bibitem{Chen2013}
H. Chen, L. Li, Z. Pan, Learning rates of multi-kernel regression by
orthogonal greedy algorithm, J.  Statist. Plan. \& Infer., 143, 276-282, 2013.

\bibitem{Chen2013a}
H. Chen, Y. Zhou, Y. Tang, L. Li and Z. Pan. Convergence rate of the
semi-supervised greedy algorithm, Neural Networks, 44, 44-50, 2013.


\bibitem{Cucker2001}
{F. Cucker, S.  Smale,  On the mathematical foundations of learning,
    Bull. Amer. Math. Soc.,  39, 1-49, 2001.}


\bibitem{Cucker2002}
F. Cucker, S. Smale,  Best choices for regularization parameters in
learning theory: on the bias-variance problem,  Found. Comput.
Math., 2, 413-428, 2002.

\bibitem{Chen1991}
 S. Chen,  C. F. Cowan,  P. M. Grant,  Orthogonal least squares learning algorithm for radial basis function networks. Neural Networks, IEEE Transactions on, 2(2): 302-309, 1991.

\bibitem{Cucker2007}
F. Cucker, D. X. Zhou,  Learning Theory: An Approximation Theory
Viewpoint, Cambridge University Press, Cambridge, 2007.


\bibitem{Dai2009}
W. Dai, O. Milenkovic, Subspace pursuit for compressive sensing
signal recontruction, IEEE Trans. Inf. Theory, 55, 2230-2249, 2009.

\bibitem{Daubechies2004}
I. Daubechies, M. Defrise and C. De Mol. An iterative thresholding
algorithm for linear inverse problems with a sparsity constraint.
Commun. Pure  Appl. Math., 57, 1413-1457, 2004.

\bibitem{Daubechies2010}
I. Daubechies, R. A. Devore, M. Fornasier, C. G\"{u}nt\"{u}rk,
Iteratively re-weighted least squares minimization for sparse
recovery,  Commun. Pure Appl. Math., 63, 1-38, 2010.


%

\bibitem{Devore1996}
R. DeVore, V. Temlyakov,  Some remarks on greedy algorithms,  Adv.
Comput. Math.,   5, 173-187, 1996.

\bibitem{Donoho2007}
D. Donoho, M. Elad, V. Temlyakov, On Lebesgue-type inequalities for
greedy approximation, J. Approx. Theory, 147, 185-195, 2007.


\bibitem{Donoho2012}
D. L. Donoho, Y. Tsaig, O. Drori, J. L. Starck, Sparse solution of
underdetermined systems of linear equations by stagewise orthogonal
matching pursuit, IEEE Trans. Inf. Theory, 58, 1094-1121, 2012.

%
%
\bibitem{Efron2004}
B.~Efron, T.~Hastie, I.~Johnstone, and R.~Tibsirani, ``Least angle
regression,'' \emph{Ann. Stat.}, vol.~32, no.~2, pp. 407--451, 2004.


\bibitem{Friedman2001}
J. Friedman, Greedy function approximation: a gradient boosting
machine, Ann. Statis., 29, 1189-1232, 2001.

\bibitem{Golub1979}
G. H. Golub, M. T. Heath,  G. Wahba, Generalized Cross-Validation as
a Method for Choosing a Good Ridge Parameter, Technometrics, 21
, 215-223, 1979.



\bibitem{Gyorfi2002}
L. Gy\"{o}rfy, M. Kohler, A. Krzyzak, H. Walk, A Distribution-Free
Theory of Nonparametric Regression, Springer, Berlin, 2002.

\bibitem{Hoerl1970}
A. E. Hoerl,  R. W. Kennard, Ridge regression: Biased estimation for nonorthogonal problems. Technometrics, 12(1), 55-67, 1970.

\bibitem{Harrison1978}
D.~Harrison and D.~L. Rubinfeld, ``Hedonic prices and the demand for clean
air,'' \emph{J. Environ. Econ.}, vol.~5, no.~1, pp. 81--102, 1978.

\bibitem{Kunis2008}
S. Kunis, H. Rauhut,  Random sampling of sparse trigonometric
polynomials II-Orthogonal matching pursuit versus basis pursit,
Found. Comput. Math.,  8, 737-763, 2008.


\bibitem{Kreyszig1979}
E.~Kreyszig, ``Applied mathematics,'' \emph{Wiley Press}, 1979.

\bibitem{Konyagin1999}
 S. V. Konyagin,  V. N. Temlyakov,  Rate of convergence of pure greedy algorithm. East J. Approx, 5(4): 493-499, 1999.

\bibitem{Lin2013a}
S. B. Lin, Y. H. Rong, X. P. Sun, Z. B. Xu, Learning capability of
relaxed greedy algorithms, IEEE Trans. Neural Netw. \& Learn. Syst.,
24, 1598-1608, 2013.

\bibitem{Lin2014}
S. B. Lin, J. S. Zeng, J. Fang, Z. B. Xu, Learning rates of $l^q$
coefficient regularization learning with Gaussian kernel, Neural
Comput.,  26, 2350-2378, 2014.


\bibitem{Liu2012}
E. Liu, V. Temlyakov,  The orthogonal super greedy algorithm and
applications in compressed sensing, IEEE. Trans. Inf. Theory, 58, 2040-2047, 2012.


\bibitem{Liu2012a}
E. Liu,  V. Temlyakov,  Super greedy type algorithms,  Adv. Comput.
Math., 37, 493-504, 2012.

%
%
%
\bibitem{Nash1994}
W.~J. Nash, T.~L. Sellers, S.~R. Talbot, A.~J. Cawthorn, and W.~B. Ford, ``The
population biology of abalone (haliotis species) in tasmania. i. blacklip
abalone (h. rubra) from the north coast and islands of bass strait,''
\emph{Sea Fisheries Division, Technical Report}, no.~48, 1994.


\bibitem{Sauer2006}
T. Sauer, Numerical Analysis, Addison-Wesley Longman, London, 2006.


\bibitem{Scholkopf2001}
B. Sch\"{o}lkopf, R. Herbrich,   A. J. Smola, A generalized
representer theorem,  In D. Helmbold and B.Williamson, edited,
Proceedings of the 14th Annual Conference on Computational Learning
Theory, pp 416-426. Springer, New York, 2001.

\bibitem{Shi2011}
L. Shi, Y. L. Feng,  D. X. Zhou,  Concentration estimates for
learning with $l_1$-regularizer and data dependent hypothesis
spaces, Appl. Comput. Harmon. Anal.,  31, 286-302, 2011.


\bibitem{Temlyakov1998}
V. Temlyakov, Greedy algorithm and $m$-term trigonometric
approximation, Constr. Approx.,  14,  569-587, 1998.



\bibitem{Temlaykov2000}
V. Temlyakov, Weak greedy algorithms, Adv. Comput. Math., 12,
213-227, 2000.


\bibitem{Temlyakov2001}
V.   Temlyakov, Greedy algorithms in Banach spaces, Adv. Comput.
Math., 14, 277-292, 2001.


\bibitem{Temlaykov2003}
V. Temlyakov, Nonlinear methods of approximation, Found. Comput.
Math.,   3, 33-107, 2003.

\bibitem{Temlaykov2008}
V. Temlyakov,  Greedy approximation,  Acta Numer.,  17,
235-409, 2008.

\bibitem{Temlaykov2008a}
V. Temlyakov, Relaxation in greedy approximation, Constr. Approx.,
28, 1-25, 2008.


\bibitem{Temlaykov2011}
V. Temlyakov, P. Zheltov,  On performance of greedy algorithms,  J.
Approx. Theory,   163, 1134-1145, 2011.



\bibitem{Tibshirani1995}
R. Tibshirani,    Regression shrinkage and selection via the LASSO,
J. Roy. Statist. Soc. Ser. B, 58, 267-288, 1995.

\bibitem{Tropp2004}
J. A. Tropp,  Greed is good: algorithmic results for sparse
approximation,  IEEE Trans. Inf. Theory,  50, 2231-2242, 2004.

\bibitem{Tropp2010}
J. A. Tropp, S. Wright, Computational methods for sparse solution of
linear inverse problems, in: Proceedings of the IEEE, 98: 948-958,
2010.



%
%
%
%
%
%
\bibitem{Wu2006}
Q. Wu,  Y. M. Ying,  D. X. Zhou, Learning rates of least square
regularized regression, Found. Comput. Math.,  6, 171-192,
2006.



\bibitem{Xu2012}
Z. B.  Xu, X. Y. Chang,  F. M. Xu, H. Zhang, $L_{1/2}$
regularization: a thresholding representation theory and a fast
solver,  IEEE. Trans. Neural netw \& Learn. system.,   23,
1013-1027, 2012.


\bibitem{Xu2014}
C. Xu, S. B. Lin,  J. Fang, R. Z. Li, Prediction-based termination rule for greedy learning with massive data. Stat. Sinica, accepted for
publication, 2015.

\bibitem{Ye1998}
I.~C. Ye, ``Modeling of strength of high performance concrete using artificial
neural networks,'' \emph{Cement and Concrete Research}, vol.~28, no.~12, pp.
1797--1808, 1998.

\bibitem{Zhang2014}
Y. Zhang, J. Duchi, M. Wainwright, Divide and conquer kernel ridge
regression: A distributed algorithm with minimax optimal rates,
arXiv:1305.5029, 2013.

\bibitem{Zhou2006}
{D. X. Zhou, K. Jetter,  Approximation with polynomial kernels and
    SVM classifiers,  Adv. Comp. Math.,  25, 323-344, 2006.}

\bibitem{Zeng2014}
{J. S. Zeng, S. B. Lin, et al.  Regularization: Convergence of Iterative Half Thresholding Algorithm,  IEEE. Trans. Signal Proces.,  62(9), 2317-2329, 2014.}


\end{thebibliography}
\end{document}